\documentclass[a4paper,fleqn]{cas-dc}

\usepackage[numbers]{natbib}
\usepackage{amsthm}
\usepackage{pifont}

\usepackage{hyperref}

\usepackage{algorithm}
\usepackage{algorithmic}

\usepackage[capitalize,noabbrev]{cleveref}

\theoremstyle{plain}
\newtheorem{theorem}{Theorem}[section]
\newtheorem{proposition}[theorem]{Proposition}

\theoremstyle{definition}

\theoremstyle{remark}

\newcommand{\cmark}{\ding{51}}
\newcommand{\xmark}{\ding{55}}

\def\tsc#1{\csdef{#1}{\textsc{\lowercase{#1}}\xspace}}
\tsc{WGM}
\tsc{QE}
\tsc{EP}
\tsc{PMS}
\tsc{BEC}
\tsc{DE}

\begin{document}
\let\WriteBookmarks\relax
\def\floatpagepagefraction{1}
\def\textpagefraction{.001}
\shorttitle{SC3D for Temporal and Instantaneous Causal Discovery}
\shortauthors{S. Das et~al.}

\title [mode = title]{SC3D: Dynamic and Differentiable Causal Discovery for Temporal and Instantaneous Graphs}                      



\author[1]{Sourajit Das}[
                        orcid=0000-0001-7307-7453]
\cormark[1]
\ead{szd399@psu.edu}

\credit{Conceptualization of this study, Methodology, Software, Writing}

\affiliation[1]{organization={Institute for Computational Data Science},
                addressline={Pennsylvania State University}, 
                city={State College},
                postcode={16801}, 
                state={Pennsylvania},
                country={USA}}


\author[1]{Dibyajyoti Chakraborthy}[%
   ]
\ead{d.chakraborty@psu.edu}

\credit{Data curation, Software, Methodology}


\author[3]{Romit Maulik}
\ead{rmaulik@purdue.edu}

\affiliation[3]{organization={School of Mechanical Engineering},
                addressline={Purdue University}, 
                city={West Lafayette},
                postcode={47906}, 
                state={Indiana}, 
                country={USA}}

\cortext[cor1]{Corresponding author}

\credit{Project supervision, conceptualization, reviewing}


\begin{abstract}
Discovering causal structures from multivariate time series is a key problem because interactions span across multiple lags and possibly involve instantaneous dependencies. Additionally, the search space of the dynamic graphs is combinatorial in nature. In this study, we propose \textit{Stable Causal Dynamic Differentiable Discovery (SC3D)}, a two-stage differentiable framework that jointly learns lag-specific adjacency matrices and, if present, an instantaneous directed acyclic graph (DAG). In Stage 1, SC3D performs edge preselection through node-wise prediction to obtain masks for lagged and instantaneous edges, whereas Stage 2 refines these masks by optimizing a likelihood with sparsity along with enforcing acyclicity on the instantaneous block. Numerical results across synthetic SVAR systems, nonlinear and chaotic benchmarks, nonstationary dynamics and real-world datasets demonstrate that SC3D achieves improved stability and more accurate recovery of both lagged and instantaneous causal structures compared to existing  baselines.
\end{abstract}



\begin{keywords}
Causal Discovery \sep Differentiable Optimization \sep Temporal Graphs \sep Structural Equation Models \sep Time Series
\end{keywords}

\maketitle

\section{Introduction}
\label{sec:intro}
The understanding of cause-effect relationships in a dynamical systems can be pivotal for scientific discovery and reliable decision making. Many application fields such as climate and geophysical systems, neuroscience, biological systems and engineered networks, generate measurable data in the form of multivariate time series. In such situations, causal mechanisms can propagate over time as well instantaneously. In general predictive models can be accurate without being causal whereas causal structure learning seeks to retrieve directed graphs which facilitate mechanistic intervention and under favorable assumptions, allows for reasoning under interventions and distribution shifts \cite{pearl2009causality, peters2017elements}. In this work, we propose \textit{Stable Causal Dynamic Differentiable Discovery (SC3D)}, a two-stage differentiable framework for causal discovery in multivariate dynamical systems that jointly learns both lag specific causal relationships and an instantaneous directed acyclic graph (DAG). Here \textit{dynamic} refers to causal dependencies across time lags rather than time varying causal graphs.

A conventional approach for causal discovery in time series data is based on Granger-style prediction \cite{granger1969investigating} which measures directed dependencies by improving forecasting accuracy. This perspective inspired a large section of the research community to focus on lagged causal learning in nonlinear and vector autoregressive models. This includes constraint based methods and conditional independence testing curated for high dimensional data \cite{runge2019detecting}. Other methods leverage functional and distributional assumptions like non-Gaussianity, to detect directed structures in temporal networks such as VAR-LiNGAM \cite{hyvarinen2010lingam}. In recent years, deep learning has also been exploited to parameterize nonlinear predictors and derive causal scores from learned temporal relationships \cite{tank2021neural, nauta2019causal}. In spite of the above progress, two challenges continue to be pressing for structural time series models: (i) identifying individual components of a high-dimensional multivariate time-series signal which causally influence one another, including both lagged inter-slice and instantaneous intra-slice effects under structural constraints, (ii) scaling the structure to larger number of variables without combinatorial search.

At the same time, differentiable causal discovery (DCD) has emerged as a powerful framework for learning directed acyclic graphs (DAGs) from observational data by formulating the structure learning as an optimization problem over a set of weighted adjacency matrices \cite{zheng2018dags}. However, generic smooth acyclicity constraints can be numerically unstable and deteriorate very quickly beyond tens of variables \cite{nazaret2023stable}. Stable Differentiable Causal Discovery (SDCD) \cite{nazaret2023stable} addresses this challenge through a numerically stable spectral acyclicity constraint and a two stage optimization technique, significantly improve scalability and robustness. For the case of time series, differentiable approaches like DYNOTEARS \cite{pamfil2020dynotears} exploit continuous optimization for discovering the dynamic Bayesian networks structure in a temporal and intra-slice manner. On the other hand, newer methods include NTS-NOTEARS \cite{sun2023ntsnotears}, which expands on the differentiable temporal structure learning to incorporate nonlinearity, while TECDI \cite{li2023causal} examines the interventional discovery of causality over time. However, extension of robust and scalable methods in the differentiable domain to structural equation model (SEM) is still not fully explored.

In this paper, we propose a stable differentiable framework for causal discovery in multivariate dynamical systems with lagged and instantaneous effects. We study a nonlinear dynamic structural equation model (SEM) of order $L$, where we consider a special case of the linear structural vector autoregressive model (VAR). Here the lagged edges represent the temporal propagation and the instantaneous edges represent intra-slice causal relationships. Motivated from the  stability aspects of the SDCD model by \cite{nazaret2023stable}, we present a two-step approach for dynamic causal discovery. Stage 1 conducts a node wise predictive screening over a time window to find candidate masks for lag-specific and instantaneous edges. Stage 2 refined these candidate masks through optimization of a likelihood based objective function, along with enforcing the spectral acyclicity penalty on the instantaneous block. This design ensures that the instantaneous block is acyclic while retaining the strengths of differentiable optimization while avoiding the problems of earlier smooth constraints \cite{zheng2018dags, nazaret2023stable}. We summarize our contributions to the paper below:
\begin{itemize}
    \item We formulate causal discovery for dynamical systems using lag-specific adjacency matrices and instantaneous DAG components, bridging structural VAR models with stable differentiable causal discovery.
    \item We introduce a two stage screening and constrained refinement approach for temporal causal discovery, by screening candidate parents within a lagged predictor window prior to enforcing acyclicity constraint on the instantaneous matrix.
    \item We provide population level guarantees proving that Stage 1 preserves all true causal parents within the chosen temporal window, and that the acyclicity of the instantaneous dependencies implies acyclicity of the time unrolled graph.
    \item We evaluate \textsc{SC3D} on synthetic, benchmark and real-world temporal datasets, demonstrating improved causal structure recovery over existing baselines.
\end{itemize}
\vspace{-0.5cm}

\begin{table}[t]
\caption{Capabilities of representative baselines for temporal (lagged) and instantaneous causal discovery. ``Instantaneous DAG'' indicates an explicit acyclic intra-slice graph; ``Nonlinear'' indicates support for nonlinear dependencies.}
\label{tab:capabilities}
\centering
\scriptsize
\setlength{\tabcolsep}{4pt}
\begin{tabular}{lcccc}
\toprule
Method          & Lagged  & Instantaneous  & Nonlinear & Differentiable \\
\midrule
\textsc{SC3D}      & \cmark & \cmark & \cmark & \cmark \\
DYNOTEARS          & \cmark & \xmark & \xmark & \cmark \\
PCMCI+             & \cmark & \xmark & \cmark & \xmark \\
NTS-NOTEARS        & \cmark & \cmark & \cmark & \cmark \\
TECDI              & \cmark & \cmark & \cmark & \cmark \\
VAR-LiNGAM         & \cmark & \cmark & \xmark & \xmark \\
NeuralGC (cMLP)    & \cmark & \xmark & \cmark & \xmark \\
TCDF               & \cmark & \xmark & \cmark & \xmark \\
\bottomrule
\end{tabular}
\vspace{-0.5cm}
\end{table}

\section{Background and Related Work}
\label{sec:background}

Fundamental research on causal discovery from temporal data traces back to Granger causality, where causality is measured by improvement in forecasting by upon addition of past history of a variable to the model ~\cite{granger1969investigating, shojaie2022granger}.  This principle underlines a wide range of vector autoregressive (VAR) and nonlinear autoregressive models, which have been applied in econometrics, neuroscience, climate science and complex analysis ~\cite{kleinberg2013causality, smith2011network, zhang2011climate}. Conventional approaches consist of constraint-based methods that leverage conditional independence testing such as PCMCI and PCMCI+ ~\cite{runge2019detecting, runge2020discovering}. These methods iteratively test if a candidate driver stays statistically dependent of the target variable after conditioning on selected past and instantaneous variables, pruning edges whose associations can be defined by the rest of the temporal system. These methods scale to moderately large systems although they rely heavily on test assumptions and hyperparameters. There is also the use of functional and distributional methods such VAR-LiNGAM, which exploit non-Gaussianity to recover directed temporal structures but require strict assumptions and deteriorate at higher dimensions \cite{hyvarinen2010lingam}.

In the recent years, researchers have explored neural parameterization of Granger causality using recurrent or convolutional neural network (RNN or CNN) architectures \cite{tank2021neural, khanna2019economy, marcinkevics2021interpretable, nauta2019causal}. Despite such networks being flexible enough and capable of capturing nonlinear temporal relationships, these approaches predominantly focus on lagged interactions and do not model explicit instantaneous causal relationships. Furthermore, RNNs are often sensitive to regularization and thresholding, rendering unstable causal discovery in sparse or high dimensional spaces. Amortized versions and representation learning approaches have relaxed some of the modeling assumptions but the result do not directly enforce graph level structural adjacency supports, or validity of the estimated scores as a well defined causal graph~\cite{lowe2022amortized, li2023crvae}.

Differentiable causal discovery has emerged to be a strong candidate for learning DAGs, by formulating the structure learning as a continuous optimization problem \cite{zheng2018dags}. Recent advancements have attempted to further speed up the optimization efficiency  and theoretical properties using approaches such as topological swaps and alternative penalties and neural network parameterizations \cite{deng2023topological, yu2019daggnn, yu2021nocurl}. However, most of these methods have utilized smooth trace-based acyclicity constraints, which can become numerically unstable or restrictive as the dimensions become large \cite{nazaret2023stable}. So Stable Differentiable Causal Discovery (SDCD) solved this bottleneck by introducing a new kind of spectral acyclicity constraint along with a two-stage optimization strategy, significantly improving the scalability for static DAGs \cite{nazaret2023stable}.

If we extend differentiable causal discovery to time series, we are faced with additional challenges, because such dynamic temporal systems display both lagged (inter-slice) and instantaneous (intra-slice) relationships. DYNOTEARS \cite{pamfil2020dynotears} extends continuous time optimization to dynamic Bayesian networks and estimates contemporaneous and time-lagged linear relationships. NTS-NOTEARS also extends the temporal structure learning approach of NOTEARS to the nonparametric dynamic Bayesian network setting using neural function approximation, allowing nonlinear lagged and instantaneous dependencies \cite{sun2023ntsnotears}. TECDI focuses on causal discovery of temporally structured graphs in the interventional setting by employing differentiable acyclicity constraints \cite{li2023causal}. Similarly, UnCLe~\cite{bi2025uncle} performs scalable causal discovery in nonlinear temporal systems with a focus on lagged dependencies, but does not model instantaneous causal graphs or enforce acyclicity constraints, making it complementary rather than directly comparable to the joint SVAR setting considered here. 

In general, existing methods either model \emph{lagged and instantaneous dependencies independently} or suffer from \emph{scalability and stability issues} when trying to learn both simultaneously. Unlike the existing methods, \textsc{SC3D} explicitly models both lag-specific temporal adjacency matrices as well as an instantaneous directed acyclic graph. Compared to the baseline methods which focus only on modeling the temporal dynamics using a purely linear temporal differentiable methods, the novel approach here leverages nonlinear dynamic structural equation models with the linear SVAR model as a special case. \textsc{SC3D} is designed to operate on observational multivariate time series data. The key methodological distinction is the combination of node-wise temporal preselection, masked likelihood based refinement along with constrained optimization. The optimization leverages a spectral and 2-cycle penalty applied only on the instantaneous causal structure to recover both lagged and instantaneous causal structure in a scalable and stable way. This novel approach avoids numerical instabilities of smooth acyclicity constraints while minimizing the combinatorial search space and tackles joint dynamic causal discovery. The implementation and code repository of \textsc{SC3D} is available at:
\url{https://github.com/ISCLPennState/SC3D}. 

\begin{figure}
    \centering
    \includegraphics[width=1.1\columnwidth]{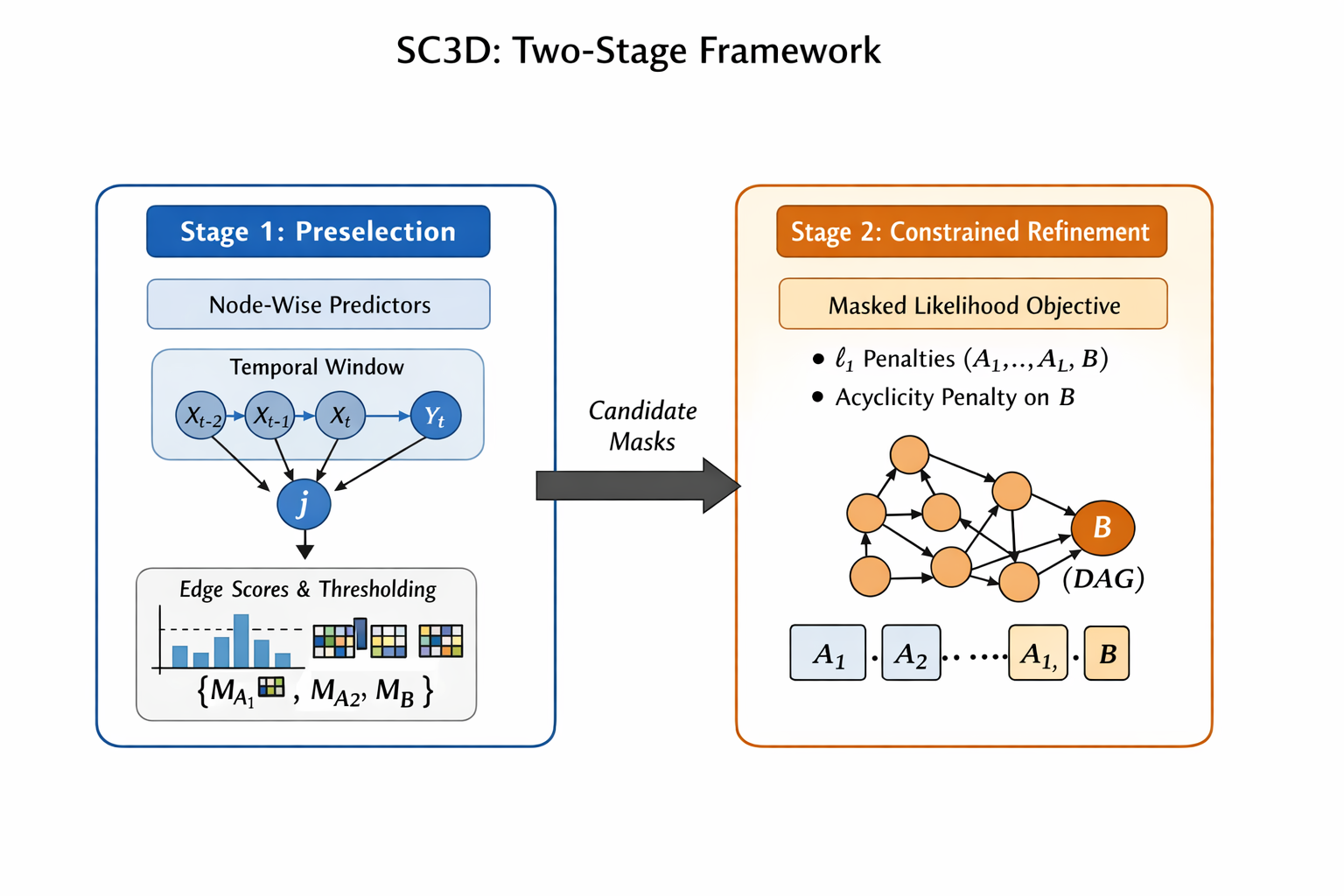}
    \vspace{-0.5cm}
    \caption{Overview of the proposed \textsc{SC3D} framework. Stage~1 performs node-wise preselection over a temporal window to generate candidate masks for lagged and instantaneous edges. Stage~2 refines the masked structure by optimizing a regularized likelihood objective, while enforcing acyclicity on the instantaneous graph $B$.}
    \label{fig:sc3d_overview}
\end{figure}

\section{Methodology}
\label{sec:method}

\begin{figure}
  \centering
  \includegraphics[width=\columnwidth]{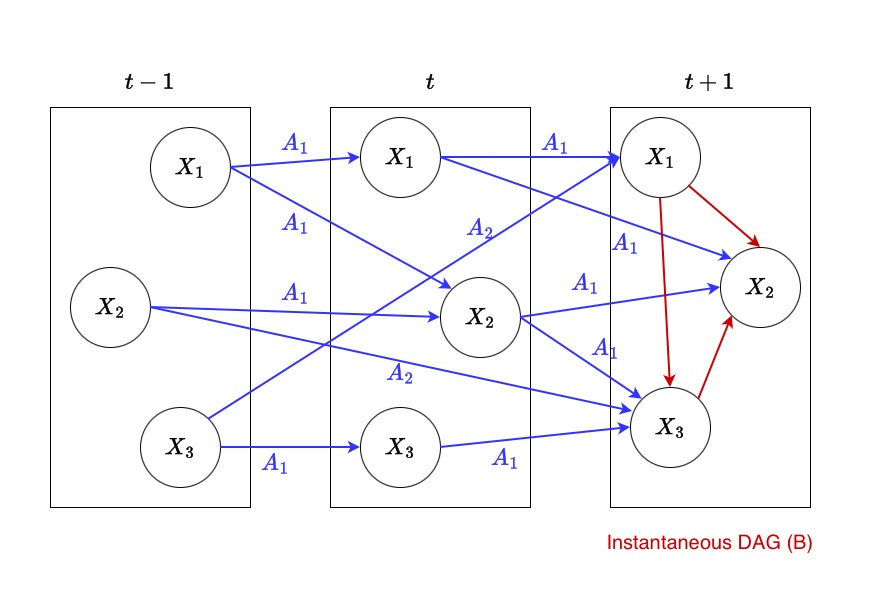}
  \caption{
  Structural vector autoregressive (SVAR) model with lagged and instantaneous causal dependencies. Nodes represent variables across time slices. Blue arrows denote lagged causal effects encoded by matrices $\{A_\ell\}_{\ell=1}^L$,
  while red arrows denote instantaneous causal relationships encoded by a directed acyclic graph $B$. \textsc{SC3D} jointly estimates both components, enforcing acyclicity only on the instantaneous block.
  }
  \label{fig:svar_model}
  \vskip -0.2in
\end{figure}

\subsection{Problem Formulation}
\label{sec:problem_formulation}

We consider a $d$-dimensional multivariate time series
$\{X_t\}_{t\in\mathbb{Z}}$ which is generated by a nonlinear dynamic structural equation model of order $L$. For any subset $S \subseteq [d]$, let
$X_t^S := (X_t^i)_{i \in S}$ represent the subvector indexed by $S$. Now for each node $j \in [d]$, the nonlinear dynamic SEM can be expressed as:
\begin{multline}
X_{t+1}^j = f_j^\star\!\left(X_{t+1}^{\mathrm{pa}_0(j)},
X_t^{\mathrm{pa}_1(j)}, \ldots, X_{t+1-L}^{\mathrm{pa}_L(j)},
\varepsilon_{t+1}^j \right), \\
j=1,\ldots,d,
\label{eq:nonlinear_dsem}
\end{multline}
where $\mathrm{pa}_0(j)$ denotes the contemporaneous parent set of node $j$,
$\mathrm{pa}_\ell(j)$ denotes its corresponding lag-$\ell$ parent set, and $\varepsilon_{t+1}^1,\ldots,\varepsilon_{t+1}^d$ represent the jointly independent noise variables independent of $\{X_s\}_{s\le t}$. We assume the contemporaneous graph induced by $\{\mathrm{pa}_0(j)\}_{j=1}^d$ to be acyclic.

We denote the causal graphs using lag-specific adjacency supports $\{A_\ell^\star\}_{\ell=1}^L$ and the instantaneous adjacency support by
$B^\star$. Throughout the entire article, we follow the convention that the row index
represents the target variable while the column index represents the source variable:
\begin{equation}
\label{eq:support_convention}
\begin{aligned}
(A_\ell^\star)_{ji}=1 &\quad \Longleftrightarrow \quad
i\in \mathrm{pa}_\ell(j), \\
(B^\star)_{ji}=1&\quad \Longleftrightarrow \quad
i\in \mathrm{pa}_0(j).
\end{aligned}
\end{equation}
Thus, $(A_\ell^\star)_{ji}=1$ denotes the lagged edge $X_{t+1-\ell}^i\rightarrow X_{t+1}^j$, while $(B^\star)_{ji}=1$ denotes the instantaneous edge
$X_{t+1}^i\rightarrow X_{t+1}^j$. Given that we have observations $\{X_t\}_{t=1}^T$, the objective of causal discovery is to recover the lagged adjacency supports $\{A_\ell^\star\}_{\ell=1}^L$ and the instantaneous adjacency support $B^\star$, rather than the exact function $\{f_j^\star\}_{j=1}^d$.

The linear structural vector autoregressive model (SVAR) is a special case of \eqref{eq:nonlinear_dsem}. Particularly, if the structural functions are linear and the noise is additive, we can write:
\begin{equation}
X_{t+1} = B^\star X_{t+1} + \sum_{\ell=1}^L A_\ell^\star X_{t+1-\ell} + \varepsilon_{t+1},
\label{eq:linear_svar}
\end{equation}
where $A_\ell^\star\in\mathbb{R}^{d\times d}$ denotes the
lag-$\ell$ coefficient matrix and $B^\star\in\mathbb{R}^{d\times d}$ denotes the instantaneous coefficient matrix. We consider the assumption that $(I-B^\star)$ is invertible and the graph induced by $B^\star$ is acyclic in nature. Figure~\ref{fig:svar_model} depicts the SVAR structure with lagged and instantaneous dependencies used in our study. Since the lagged edges are ordered by time, they cannot form directed cycles in the time-unrolled graph. In contrast, instantaneous edges occur within the same time slice and are required to satisfy an acyclicity constraint to represent a well-defined structural model.

\subsection{Overview of \textsc{SC3D}}
\label{sec:sc3d_overview}

The causal discovery problem in~\eqref{eq:nonlinear_dsem} is challenging because we have to account for a combinatorial search space due to the joint lagged and instantaneous dependencies, as well as numerical instability when enforcing acyclicity constraints in high dimensional spaces. In order to address these challenges, we propose \textsc{SC3D}, a two-stage differentiable framework for causal discovery in multivariate dynamical systems comprising of both lagged and instantaneous dependencies. The proposed method takes a multivariate time series as input and gives the outputs: (i) lag-specific adjacency matrices $\{A_\ell\}_{\ell=1}^L$ encoding temporal inter-slice causal relationships and (ii) an instantaneous directed acyclic graph (DAG), $B$, representing intra-slice causal relationships. The overall design is motivated from the stability and scalability aspects of differentiable causal discovery, while explicitly utilizing the temporal ordering intrinsic to time-series data.

In Stage 1, we perform temporal screening for each target variable (node) by fitting conditional predictive models using a fixed time window of candidate parents. This step identifies a sparse set of candidate lagged and instantaneous edges by removing weak dependencies while keeping all true causal parents at the population level. The output of Stage 1 is a pair of binary masks that limit the admissible lagged and instantaneous edges in the next stage. In Stage 2, we refine the causal structure by re-optimizing the model parameters under the masks from Stage 1 while enforcing acyclicity on the instantaneous adjacency matrix. We gradually increase the penalty coefficient until the instantaneous graph becomes acyclic, following which we keep it fixed. This design utilizes the time ordering of the lagged edges to ensure numerical stability and controlled optimization for the instantaneous structure.

\subsection{Stage 1: Node-wise Temporal Preselection}
\label{sec:stage1}

Stage 1 aims to narrow down the search space by removing unlikely causal edges early while retaining the true dynamic parents at the population level. This is accomplished by node-wise temporal screening, which discovers possible lagged and instantaneous parents for all variables without any acyclicity constraint. For each target variable $X_{t+1}^j$, let us consider a temporal predictor window 
\[
\mathcal{V}_t^{(j)} := \{ X_{t+1}^{-j}, X_t, X_{t-1}, \dots, X_{t+1-L} \}.
\]
which includes the lagged variables up to order $L$ and, if instantaneous effects are allowed, contemporaneous variables without self-loops. This window defines the pool of candidate parents for $X_{t+1}^j$. 

Now we define the \emph{dynamic Markov boundary} within the window $\mathcal{V}_t^{(j)}$, denoted by $\mathrm{MB}^{\mathrm{dyn}}(j)$ is the minimal set
$S \subseteq \mathcal{V}_t^{(j)}$ such that
\[
X_{t+1}^j \;\perp\!\!\!\perp\;
\mathcal{V}_t^{(j)} \setminus S \;\big|\; S.
\]
Intuitively, $\mathrm{MB}^{\mathrm{dyn}}(j)$ includes all lagged and contemporaneous variable within the window that directly affect the data generating process of $X_{t+1}^j$.

At the population level, this stage is equivalent to maximizing a penalized conditional log-likelihood over subsets of predictors. For any subset $S \subseteq \mathcal{V}_t^{(j)}$, we define the Stage-1 (population) score
\begin{equation}
\Psi_j(S) := \sup_{\theta:\,\mathrm{supp}(\theta)\subseteq S}\;
\mathbb{E}\big[\log p_\theta(X_{t+1}^j \mid X_S)\big] \;-\;\lambda |S|.
\label{eq:stage1-pop}
\end{equation}
where $p_\theta$ represents a node-wise conditional prediction model
parameterized using a differentiable predictor. In our experiments, the
predictor is implemented as a multilayer perceptron (MLP), which maps
the candidate temporal window $\mathcal{V}_t^{(j)}$ to the target variable $X_{t+1}^j$. For our application purposes, we use a Gaussian conditional likelihood, where the mean is represented by the output of the MLP. With a fixed observation variance, the maximizing the conditional log-likelihood is the same as the minimizing the squared error prediction loss. The parameter $\lambda>0$ controls the sparsity of the selected input groups. \cref{thm:stage1} highlights the role of Stage 1 and proves that the preselection step does not discard any true dynamic parents within the temporal window. 

\begin{theorem}
    \label{thm:stage1}
    Using the assumptions enlisted in Appendix~\ref{app:stage1-assumptions}, there exists $\lambda_0 > 0$ such that for all $0 < \lambda \leq \lambda_0$, any maximizer
    \begin{equation}
        S_j^\star \in \arg\max_{S \subseteq \mathcal{V}_t^{(j)}} \Psi_j(S)
    \end{equation}
    contains the dynamic Markov boundary for $X_{t+1}^j$. In particular,
    \begin{equation}
        \mathrm{MB}^{\mathrm{dyn}}(j) \subseteq S_j^\star,
    \end{equation}
    i.e. Stage 1 does not discard any true dynamic parents in the window. 
\end{theorem}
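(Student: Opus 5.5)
The plan is to compare the score of an arbitrary subset $S$ with that of $S\cup \mathrm{MB}^{\mathrm{dyn}}(j)$. We show that leaving out any element of the Markov boundary costs a strictly positive amount of expected log-likelihood. We then choose $\lambda_0$ small enough that this loss always outweighs the sparsity savings. The assumptions of Appendix~\ref{app:stage1-assumptions} that we expect to use are the following.
\begin{enumerate}
\item Well-specification: the model class $\{p_\theta\}$ contains the true conditional density $p(X_{t+1}^j\mid X_S)$ for each $S\subseteq\mathcal{V}_t^{(j)}$, or at least for each $S\supseteq \mathrm{MB}^{\mathrm{dyn}}(j)$.
\item Finiteness of the relevant expected log-likelihoods.
\item Uniqueness and minimality of the Markov boundary.
\end{enumerate}
Uniqueness holds, for example, under an intersection/positivity property. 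Finiteness of the window ($|\mathcal{V}_t^{(j)}|\le dL+d-1$) will make every maximum over subsets a finite maximum.

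\textbf{Step 1 (likelihood as conditional entropy).} Under well-specification, Gibbs' inequality gives
\[
\sup_{\mathrm{supp}(\theta)\subseteq S}\mathbb{E}[\log p_\theta(X^j_{t+1}\mid X_S)] = -H(X^j_{t+1}\mid X_S).
\]
Hence $\Psi_j(S) = -H(X^j_{t+1}\mid X_S)-\lambda|S|$. In the Gaussian fixed-variance setting used in practice, this becomes the minus minimal mean squared error $\mathbb{E}[(X^j_{t+1}-\mathbb{E}[X^j_{t+1}\mid X_S])^2]$, up to affine constants, and the argument below carries over verbatim with MSE in place of entropy.

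\textbf{Step 2 (strict gap for missing boundary elements).} Let $M=\mathrm{MB}^{\mathrm{dyn}}(j)$ and fix any $S$ with $M\not\subseteq S$. Set $S'=S\cup M$. Then
\[
H(X^j_{t+1}\mid X_S)-H(X^j_{t+1}\mid X_{S'}) = I(X^j_{t+1};X_{M\setminus S}\mid X_S)\ge 0.
\]
We claim this is strictly positive. If it were zero, then $X^j_{t+1}\perp\!\!\!\perp X_{M\setminus S}\mid X_S$. Since $S'\supseteq M$, the Markov property gives $X^j_{t+1}\perp\!\!\!\perp \mathcal{V}\setminus S'\mid X_{S'}$. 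Contraction then yields $X^j_{t+1}\perp\!\!\!\perp \mathcal{V}\setminus S\mid X_S$. So $S$ would be a Markov blanket, and by the intersection property (positivity), $S\cap M$ would also be a Markov blanket. This contradicts the minimality and uniqueness of $M$, since $S\cap M\subsetneq M$.

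Define
\[
\delta:=\min_{S:\,M\not\subseteq S}\bigl[H(X^j_{t+1}\mid X_S)-H(X^j_{t+1}\mid X_{S\cup M})\bigr]>0.
\]
This is a minimum over finitely many sets, so it is attained and positive.

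\textbf{Step 3 (choice of $\lambda_0$).} For such $S$ we have $\Psi_j(S\cup M)-\Psi_j(S)\ge \delta-\lambda|M\setminus S|\ge \delta-\lambda|M|$. Take $\lambda_0=\delta/(2|M|)$, or any $\lambda_0<\delta/|M|$; the case $M=\emptyset$ is trivial. Then for $0<\lambda\le\lambda_0$ the difference is strictly positive, so no $S$ with $M\not\subseteq S$ can be a maximizer. Therefore every maximizer $S_j^\star$ contains $M$.

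\textbf{Main obstacle.} The main obstacle is Step 2, specifically the strict positivity of the gap. It requires both an intersection/faithfulness-type condition, so that $M$ is unique and every proper "hole" is informative, and well-specification of the MLP class, so that the supremum actually equals the true conditional entropy. Without these, the supremum could be strictly smaller than the true value and the gap could collapse. I would first look in the appendix assumptions for an explicit positivity or identifiability condition and well-specification, and invoke them directly. If the appendix instead gives only a "minimal signal" assumption, namely a lower bound on the log-likelihood improvement from adding any missing boundary variable, then $\delta$ is simply that bound, and the contraction/intersection argument is replaced by citing it.
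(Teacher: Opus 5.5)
Your proof is correct and follows the paper's skeleton. You write the population log-likelihood as a constant minus a KL (equivalently, conditional-entropy) gap, and note that the gap vanishes once the boundary is included. You then show it is strictly positive otherwise, take the minimum over the finitely many subsets of the window, and choose $\lambda_0$ so that this minimum dominates the sparsity savings.

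\textbf{Where you differ: strict positivity of the gap.} The paper only asserts this step ``by strict faithfulness and strict positivity.'' Its faithfulness assumption (item~3 in Appendix~\ref{app:stage1-assumptions}) concerns only true parents. The window-level boundary, however, can also contain contemporaneous children of $j$ and their co-parents. Your contraction-plus-intersection argument fills this in: a zero gap would make $S$ a blanket, hence $S\cap M\subsetneq M$ would be a blanket, contradicting minimality. This derives positivity from the minimality of $M$ alone and covers every boundary element. It also shows that the intersection property supplied by positivity is all you need for $M\subseteq S_j^\star$.

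\textbf{Where the paper gets more.} The paper compares each $S$ with $B=\mathrm{MB}^{\mathrm{dyn}}(j)$ itself and also rules out strict supersets. This gives the stronger conclusion $S_j^\star=\mathrm{MB}^{\mathrm{dyn}}(j)$. Your comparison with $S\cup M$ gives only the containment the theorem states. You could recover exact equality by adding $\Psi_j(M)-\Psi_j(S)=\lambda(|S|-|M|)>0$ for $S\supsetneq M$.

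\textbf{Two small points.} First, the ``in particular'' clause needs true parents $\subseteq M$, which you never address. This is where the faithfulness assumption enters: if a blanket omitted a parent $W$, weak union would give $X^j_{t+1}\perp\!\!\!\perp W\mid \mathcal{V}_t^{(j)}\setminus\{W\}$, contradicting it.

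Second, you need well-specification only for $S\supseteq M$, which is essentially what the paper's first assumption provides. For sets missing part of $M$, Gibbs' inequality gives $\sup_\theta\mathbb{E}[\log p_\theta(X^j_{t+1}\mid X_S)]\le -H(X^j_{t+1}\mid X_S)$, which only widens the gap. So your concern that a deficient model class could make the gap collapse applies only to supersets of $M$.
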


Here, $\Psi_j(S)$ denotes the Stage-1 population objective for node $j$, defined as the penalized conditional log-likelihood over predictors $S$. The assumptions of \cref{thm:stage1} and its proof are derived in Appendix \ref{app:stage1-assumptions}. The proof is inspired by population-level selection arguments used in SDCD for static DAGs \cite{nazaret2023stable}, and is adapted to fit the temporal dynamic SEM setting in our case.
 
In reality, we fit node-wise conditional models that predict $X_{t+1}^j$ from subsets of $\mathcal{V}_t^{(j)}$, parameterized using differentiable predictors. The edge weights are then extracted from the grouped input layer weight magnitudes, producing nonnegative scores for each lagged and instantaneous parent. The sparsity is enforced through $\ell_1$ regularization on these group magnitudes. The learned group magnitudes are thresholded to create binary masks for lagged and instantaneous edges which are then used in Stage 2.

\paragraph{Stage-1 Optimization}
In practice, Stage~1 can be implemented by optimizing the empirical version of the
population score in~\eqref{eq:stage1-pop}.
Specifically, for each node $j$, we aim to solve
\begin{equation}
\hat{\theta}_j = \arg\max_{\theta} \frac{1}{n}\sum_{t=1}^n
\log p_\theta(X_{t+1}^j \mid X_{\mathcal{V}_t^{(j)}}) - \lambda \|\theta\|_1,
\label{eq:stage1_empirical}
\end{equation}
where the $\ell_1$ penalty enhances sparsity in the learned input groups.
The resulting group magnitudes are then thresholded to generate the binary masks to be
used in Stage~2.

\subsection{Stage 2: Constrained Structure Refinement with Spectral Acyclicity}
\label{sec:stage2}
Stage 2 learns the causal structure by refining the model parameters based on the mask which were obtained in Stage 1, but this time with the intention of enforcing acyclicity on the instantaneous causal graph. All the disallowed edges from Stage 1 are now set to zero and the optimization is limited to the remaining lagged and instantaneous candidates. We now optimize regularized likelihood objective function that encourages sparsity in both the lagged and instantaneous adjacency matrices. This constrained re-optimization significantly reduces the search space and stabilizes training in high-dimensional settings. 

Let $\{A_\ell\}_{\ell=1}^L$ denote the set of lag-specific adjacency matrices and $B$ denote the instantaneous adjacency matrix. Since lagged edges move forward in time by design, they do not need additional constraints.  Thus, we enforce an explicit constraint only on the instantaneous block $B$.

\paragraph{Stage-2 Optimization Problem}
Let $\{M_{A_\ell}\}_{\ell=1}^L$ and $M_B$ represent the binary masks obtained from
Stage~1 screening for lagged and instantaneous edges, respectively. The
learned matrices can be restricted to the admissible entries by imposing
\begin{equation}
\label{eq:mask_constraints}
\begin{aligned}
A_\ell &= A_\ell \odot M_{A_\ell}, \qquad \ell=1,\ldots,L, \\
B &= B \odot M_B,
\end{aligned}
\end{equation}
where $\odot$ denotes element-wise multiplication of the matrices. Using the binary masks we obtained in Stage 1, we restrict both the lagged and instantaneous edges to the corresponding masked entries. The goal is to now estimate the matrices $\{A_\ell\}_{\ell=1}^L$ and $B$ by minimizing a regularized negative log-likelihood function, while enforcing acyclicity on the instantaneous block $B$. For the evaluation of the negative log-likelihood, we use the same MLP-based Gaussian conditional models as above, but limited to the admissible inputs based on Stage~1 masks. Concretely, the Stage-2 optimization problem can be written as follows:
\begin{equation}
\begin{aligned}
\min_{\{A_\ell\}_{\ell=1}^L,\, B} \quad & \mathcal{L}_{\mathrm{nll}}(\{A_\ell\}, B)
+ \alpha \sum_{\ell=1}^L \|A_\ell\|_1 \\
& + \beta \|B\|_1 + \lambda_{\mathrm{2c}} \| B \odot B^\top \|_{1} \\
\text{s.t.} \quad & \rho(|B|) = 0,
\end{aligned}
\label{eq:stage2_constrained}
\end{equation}
where $\mathcal{L}_{\mathrm{nll}}$ is the negative log-likelihood of the node-wise conditional models, $\|\cdot\|_1$ is the $\ell_1$ norm which promotes sparsity, and $\rho(|B|)$ denotes the spectral radius of absolute instantaneous adjacency matrix$B$.


In practice, we solve a relaxation of the acyclicity constraint in~\eqref{eq:stage2_constrained}: 
\begin{multline}
    \label{eq:stage2_obj}
    \min_{\{A_\ell\}_{\ell=1}^L,\, B}\;\;
    \mathcal{L}_{\mathrm{nll}}(\{A_\ell\}, B) \;+\; 
    \alpha \sum_{\ell=1}^L \| A_\ell \|_{1} \;+\; \beta \| B \|_{1} \\ \;+\;
    \gamma \, \rho(|B|) \;+\; \lambda_{\mathrm{2c}} \, \| B \odot B^\top \|_{1}. 
\end{multline}
under the mask constraints defined in \eqref{eq:mask_constraints}. Here $B \odot B^\top$ represents the element wise product of $B$ with its transpose. The spectral radius penalty, $\rho(B)$, equals to zero if and only if the underlying directed graph induced by $B$ is acyclic. 

This approach provides a numerically stable alternative to smooth trace based acyclicity constraints~\cite{nazaret2023stable}. The spectral radius tends to enforce global acyclicity but it does not explicitly prevent short directed cycle during the optimization. When dealing with instantaneous graphs of large dimensions, it is possible to encounter strong 2-cycles composed of pairs of opposing edges that may satisfy a small spectral radius but potentially lead to unstable or ambiguous causal assignments. In order to address, we augment the spectral constraint with a 2-cycle penalty $\|B \odot B^\top\|_1$, which directly penalizes the reciprocal instantaneous edges. This additional term discourages ambiguous reciprocal dependencies and stabilizes optimization in high dimensional regimes without negating the smoothness and scalability properties. 

In order to prevent the over penalization of the acyclicity constraint, we adopt a gradual penalty schedule, $\gamma$. In other words, we begin with $\gamma$ set to zero and increase it linearly during the training phase. At regular intervals, we extract a DAG from the current estimate of $B$ by retaining the highest magnitude edges that preserve acyclicity. Once the extracted instantaneous graph satisfies the acyclicity constraint in terms of the target edge budget, we fix the penalty coefficient $\gamma$ for the rest of the training. This approach impedes the oscillatory behavior and stabilizes optimization upon convergence.

Since the lagged edges are only able to connect nodes from earlier time slices to later time slices, it is sufficient to impose acyclicity on the instantaneous block $B$ to guarantee that the acyclicity of the time-unrolled causal graph. The following proposition formalizes this insight showing that enforcing acyclicity on the instantaneous block alone is sufficient to guarantee acyclicity of the time-unrolled graph. 

\begin{proposition}
    \label{prop:unrolled_acyclic}
    Consider a finite time window $T\in\mathbb{N}$, time unrolled graph generated by the dynamic SEM~\eqref{eq:nonlinear_dsem} over the window $\{X_1,\dots,X_T\}$ with instantaneous edges denoted as $B^\star$ and lag-$\ell$ edges denoted as $A_\ell^\star$. If the directed graph generated by $B^\star$ is acyclic, then the time unrolled graph over the window $\{1, \dots, T\}$ is acyclic for any lagged structures $\{A_\ell^\star\}_{\ell=1}^L$.
\end{proposition}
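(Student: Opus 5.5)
The plan is a potential-function (topological-order) argument on the time-unrolled graph. First I fix the object precisely. The vertex set is $\{(t,i): t\in\{1,\dots,T\},\, i\in[d]\}$. Following the convention in \eqref{eq:support_convention}, the edge set contains:
\begin{itemize}
\item a lagged edge $(t-\ell,i)\to(t,j)$ whenever $(A_\ell^\star)_{ji}=1$, $1\le\ell\le L$, and both endpoints lie in the window;
\item an instantaneous edge $(t,i)\to(t,j)$ whenever $(B^\star)_{ji}=1$.
\end{itemize}
Every edge therefore either strictly increases the time index, or keeps it fixed and is a copy of an edge of the graph $G_{B^\star}$ induced by $B^\star$.

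Next I use the hypothesis that $G_{B^\star}$ is acyclic. By the standard characterization of DAGs, it admits a topological order. Concretely, there is a bijection $\pi:[d]\to[d]$ such that $(B^\star)_{ji}=1$ implies $\pi(i)<\pi(j)$. I then define the lexicographic potential $\Phi(t,i):=(t,\pi(i))$ on vertices, ordered lexicographically on $\mathbb{N}\times[d]$.

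The key step is to check that $\Phi$ strictly increases along every edge.
\begin{itemize}
\item For a lagged edge, $t-\ell<t$ because $\ell\ge1$, so the first coordinate strictly increases.
\item For an instantaneous edge, the first coordinate is unchanged and $\pi(i)<\pi(j)$ by the choice of $\pi$.
\end{itemize}
Now suppose a directed cycle $v_0\to v_1\to\dots\to v_k=v_0$ existed. Chaining these strict inequalities gives $\Phi(v_0)<\Phi(v_0)$ in a strict total order, which is a contradiction. Equivalently, sorting vertices by $\Phi$ yields a topological order of the unrolled graph.

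The argument uses nothing about the $A_\ell^\star$ beyond $\ell\ge1$, so it holds for arbitrary lagged structures, and the finiteness of $T$ plays no essential role. An alternative is a direct decomposition: the time indices along any cycle are nondecreasing and must return to their start, hence are constant, so the cycle consists solely of instantaneous edges in one slice, contradicting acyclicity of $G_{B^\star}$. I expect no real obstacle. The only point needing care is bookkeeping: the unrolled edges must respect the row-target/column-source convention, and self-loops in $B^\star$ (diagonal entries) are excluded by acyclicity, while lag-$\ell$ self-edges $(t-\ell,j)\to(t,j)$ are harmless because they advance time.
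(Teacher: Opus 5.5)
Your proof is correct and is essentially the paper's argument. The paper likewise takes a topological order $\pi$ of the graph induced by $B^\star$ and orders the vertices $(t,j)$ lexicographically, first by time and then by $\pi$. It then checks that lagged edges strictly advance $t$ and that instantaneous edges keep $t$ fixed while strictly advancing $\pi$.
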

The proof of \cref{prop:unrolled_acyclic} is given in Appendix~\ref{app:unrolled_acyclic}.

\subsection{Identifiability Scope}
\label{sec:identifiability}

The analysis above address two properties of the proposed framework: Stage~1
screening preserves the true dynamic parent set under population-level assumptions, and acyclicity of the instantaneous graph is sufficient to guarantee acyclicity of the time-unrolled graph. However, complete identifiability of the target causal graph from observational time-series data needs additional assumptions on the data generation process. We therefore state the scope within which the lagged and instantaneous graph supports are identifiable.

\begin{proposition}
\label{prop:ident_dyn}
Using Assumption~\ref{ass:ident_dyn}, the observational distribution of the process uniquely determines the complete time unrolled causal graph within the assumed model class. Hence, both the lagged adjacency supports $\{A_\ell^\star\}_{\ell=1}^L$and the instantaneous adjacency support $B^\star$ are identifiable within the model class.
\end{proposition}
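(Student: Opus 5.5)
The proof splits the identification problem into two parts. The lagged supports are handled by temporal ordering and conditional independence. The instantaneous support is handled by invoking an identifiability result for static DAGs, which Assumption~\ref{ass:ident_dyn} is presumably designed to supply (for example, additive noise with non-Gaussian errors as in LiNGAM, or a nonlinear additive noise model with causal minimality and faithfulness).

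The first step is to condition on the past. Fix $t$ and let $\mathcal{H}_t = (X_t,\dots,X_{t+1-L})$. Because the noises $\varepsilon_{t+1}$ are jointly independent and independent of $\{X_s\}_{s\le t}$, the conditional law of $X_{t+1}$ given $\mathcal{H}_t = h$ is generated by a static SEM over the $d$ coordinates. In that SEM the structural functions are $f_j^\star(\cdot, h, \cdot)$ and the graph is $B^\star$, which is acyclic.

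The second step identifies $B^\star$. For each fixed $h$, the static identifiability result contained in Assumption~\ref{ass:ident_dyn} determines the DAG of this conditional SEM from the conditional distribution. Two points need care:
\begin{itemize}
\item The structural graph must not depend on $h$. I would use causal minimality: an edge $i\to j$ that is present in $B^\star$ must be active for a set of $h$ of positive probability.
\item Taking the union over such $h$, or using the graph at a generic $h$, then recovers $B^\star$.
\end{itemize}
In the linear SVAR case this is simpler. The reduced-form residual $X_{t+1}-\mathbb{E}[X_{t+1}\mid\mathcal{H}_t] = (I-B^\star)^{-1}\varepsilon_{t+1}$ has a distribution that is identified, and LiNGAM identifiability gives $B^\star$.

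The third step identifies the lagged supports $A_\ell^\star$. Once $B^\star$, and hence the contemporaneous parents $\mathrm{pa}_0(j)$, is known, I would characterize lagged parents by a conditional independence statement. The claim is that $i \in \mathrm{pa}_\ell(j)$ if and only if $X_{t+1-\ell}^i \not\!\perp\!\!\!\perp X_{t+1}^j$ given $\bigl(X_{t+1}^{\mathrm{pa}_0(j)}, \mathcal{H}_t \setminus \{X_{t+1-\ell}^i\}\bigr)$.
\begin{itemize}
\item The ``if'' direction, which gives independence for non-parents, follows from the local Markov property of the SEM. By Proposition~\ref{prop:unrolled_acyclic} the time-unrolled graph is acyclic, so the SEM is well defined there.
\item The converse, which gives dependence for true parents, follows from causal minimality or faithfulness in Assumption~\ref{ass:ident_dyn}.
\end{itemize}
These conditional independencies are functionals of the observational distribution, so each $A_\ell^\star$ is determined. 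Stationarity, or time-invariance of the $f_j^\star$, then lifts the slice-level identification to the full time-unrolled graph.

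I expect the second step to be the main obstacle, namely transferring static-DAG identifiability to the conditional SEM uniformly in $h$. A single conditional slice might hide an edge, for instance when an interaction term with the past vanishes at particular values of $h$. My first approach would be to state the argument at the level of the model class exactly as the assumption allows: require the conditional SEM to lie in an identifiable class for almost every $h$, and then use minimality to show that the graph is constant in $h$.
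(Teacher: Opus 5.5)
There is a genuine gap in your second step. Assumption~\ref{ass:ident_dyn}(iv) is imposed on the time-unrolled SEM, not on the slice model you get by freezing the history. So no static identifiability result is available for the conditional SEM with mechanisms $g_j^\star(\cdot,h)$ at a fixed $h$. IFMOC-type conditions, as in the works the paper cites, only require the following: for each edge and each admissible conditioning set, the bivariate restriction is identifiable at \emph{some} value of the conditioning variables. Those variables may include history values, and the good values can differ from edge to edge. Nothing forces the whole slice DAG to be identifiable at a single $h$, let alone for almost every $h$.

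Your proposed patch also contains a false step: causal minimality does not make the conditional graph constant in $h$. Take $g_j^\star(x_i,h)=c(h)\tanh(x_i)$, where $h$ is the value of a lagged parent of $j$ and $c$ is smooth, zero on $(-\infty,0]$ and positive on $(0,\infty)$. Then $i\in\mathrm{pa}_0(j)$ and minimality holds. Yet the edge $i\to j$ is absent from the conditional graph on the positive-probability event $\{h\le 0\}$. So the graph at a generic $h$ can miss edges, and only a union over $h$ can recover $B^\star$. That union still needs identifiability at the values of $h$ where each edge is active, which is exactly the extra assumption you would be adding.

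The paper avoids this by never reducing to slices. It builds the finite-horizon unrolled graph $G_H^\star$ and shows it is a DAG using the same time-then-topological order as in Proposition~\ref{prop:unrolled_acyclic}. It then observes that the unrolled process is an additive-noise SEM on $G_H^\star$ and applies (iv) directly: the law of $(X_1,\dots,X_H)$ determines $G_H^\star$, and time-homogeneity lets you read off $A_\ell^\star$ and $B^\star$.

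Your first and third steps are sound, and the third does not need faithfulness. By assumption (ii) and acyclicity of $B^\star$, $\varepsilon_{t+1}^j$ is independent of $(X_{t+1}^{\mathrm{pa}_0(j)},\mathcal{H}_t)$. Hence the conditional law of $X_{t+1}^j$ given that vector is the noise law shifted by $g_j^\star$. This law varies with $X_{t+1-\ell}^i$ exactly when $g_j^\star$ does, so minimality suffices.

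To close the second step under the stated assumption, drop the demand for full identifiability at each fixed $h$. Instead use the IFMOC argument itself, in which the history enters only through the conditioning sets; most simply, apply it to the unrolled DAG as the paper does. At that point your separate conditional-independence step for the lagged supports becomes unnecessary. What your conditioning on $\mathcal{H}_t$ does buy is that it never needs a model for the initial window, which the paper handles by declaring $X_1,\dots,X_L$ exogenous roots.
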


The proof of Proposition~\ref{prop:ident_dyn} is provided in
Appendix~\ref{app:proof_ident_dyn}.

\paragraph{Remark:}
Proposition~\ref{prop:ident_dyn} is an identifiability statement for the
restricted additive-noise dynamic SEM in Assumption~\ref{ass:ident_dyn}. It
makes no claim about the identifiability of arbitrary observational SVAR or nonlinear dynamic SEMs. Specifically, even in the case of linear Gaussian models, identifiability can not be fully guaranteed without additional assumptions, such as non-Gaussian disturbances or equal error variances \cite{peters2014identifiability}.

\subsection{Algorithm}
\label{sec:algorithm}

The first stage of \textsc{SC3D} performs node-wise temporal preselection in order to determine potential lagged and instantaneous edges. In Stage~2, it refines the causal structure via optimization of regularized likelihood objective function based on the resulting masks while ensuring acyclicity on the instantaneous block through a spectral acyclicity penalty. The algorithm in ~\cref{alg:dynamic_sdcd} outlines the proposed two-stage approach for the dynamic causal discovery along with the important hyperparameters considered in the experiments.

Penalty freezing is triggered when the extracted instantaneous graph satisfies
the acyclicity criterion and retains at least
\[
E_{\min}=\lfloor \eta(d)s_{\mathrm{inst}}d \rfloor
\]
edges, where $s_{\mathrm{inst}}$ denotes the expected instantaneous indegree.
In the reported experiments, we set $s_{\mathrm{inst}}=2$. The retain fraction $\eta(d)$
is chosen adaptively based on dimension:
\[
\eta(d) =
\begin{cases}
0.5, & d \le 6, \\
0.65, & 7 \le d \le 20, \\
0.8, & d > 20.
\end{cases}
\]
This criterion is utilized solely for deciding whether to freeze the acyclicity penalty coefficient $\gamma$. it is not employed for any kind of thresholding of the final output graph. Small values of $\eta(d)$ would cause premature freezing of the penalty, possibly resulting in an excessively sparse instantaneous graph, while large values of $\eta(d)$ would prolong freezing times, thereby potentially over-penalizing the optimization process. We observed that our approach was robust even when subjected to minor variations in this retained-fraction criterion. A related sensitivity study on the Stage~1 retained fraction is described in Appendix~\ref{app:masking_ablation}, where the instantaneous error remains comparatively stable across different thresholds.
\begin{algorithm}
    \caption{Dynamic Stable Differentiable Causal Discovery}
    \label{alg:dynamic_sdcd}
    \begin{algorithmic}
    \STATE{\bfseries Input:}Multivariate time series $\{X_t\}_{t=1}^T$, lag order $L$, sparsity parameter $\lambda$
    \STATE{\bfseries Output: }Lagged adjacency matrices $\{A_\ell\}_{\ell=1}^L$, instantaneous DAG $B$

    \vspace{0.3em}
    \STATE \textbf{Stage 1: Node-wise temporal preselection}
    \FOR{$j = 1$ to $d$}
      \FOR{epoch $=1$ to $E_1$}
        \STATE Minimize using Adam Optimizer
          \[
            \mathcal{J}_j(\theta) = -\frac{1}{n}
            \sum_t \log p_\theta(X_{t+1}^j \mid X_{\mathcal{V}_t^{(j)}})
            + \lambda \|\theta\|_1
          \]
      \ENDFOR
      \STATE Compute edge scores as group norms of the first layer input weights
    \ENDFOR
    \STATE Generate masks $\{\mathcal{M}_{A_\ell}\}, \mathcal{M}_B$ by thresholding

    \vspace{0.3em}
    \STATE \textbf{Stage 2: Constrained optimization for structure refinement}
    \STATE Initialize $(\{A_\ell\}, B)$ using Stage~1
    \STATE $\gamma \leftarrow 0$, $\mathrm{frozen}\leftarrow\mathrm{false}$
    
    \FOR{epoch $=1$ to $E_2$}
      \STATE Update $(\{A_\ell\}, B)$ using Adam Optimizer
      \[
        \mathcal{L} =
        \mathcal{L}_{\mathrm{nll}}
        + \alpha \sum_\ell \|A_\ell\|_1
        + \beta \|B\|_1
        + \gamma \rho(|B|)
        + \lambda_{\mathrm{2c}} \|B \odot B^\top\|_1
      \]
      \STATE Apply masks:
      \[
      A_\ell \leftarrow A_\ell \odot \mathcal{M}_{A_\ell},
      \quad \ell=1,\ldots,L, 
      \quad
      B \leftarrow B \odot \mathcal{M}_B .
      \]
      \IF{$\mathrm{frozen}=\mathrm{false}$}
        \STATE Increase $\gamma$ according to the penalty schedule.
        \STATE Extract a candidate instantaneous DAG $\widehat{G}_B$ from $B$.
        \IF{$\widehat{G}_B$ is acyclic and $|E(\widehat{G}_B)| \ge E_{\min}$}
          \STATE Freeze $\gamma$ and set $\mathrm{frozen}\leftarrow\mathrm{true}$.
        \ENDIF
       \ENDIF
    \ENDFOR
    \STATE{\bfseries Return:}$\{A_\ell\}_{\ell=1}^L$, $B$
    \end{algorithmic}
\end{algorithm}

In the reported experiments, we use Adam optimization with different learning rate for both stages. Stage~1 is trained for $E_1=200$ epochs with sparsity coefficient $\lambda=0.15$. In stage~2, we use learning rate $\eta_2 = 1.25\times10^{-3}$, $\alpha=0.02$, $\beta=0.001$, and $\lambda_{\mathrm{2c}}=0.05$, with a linear schedule for $\gamma$ starting from zero. Penalty freezing is triggered when the extracted instantaneous DAG is acyclic and retains at least $\lfloor \eta(d) s_{\mathrm{inst}} d \rfloor$ edges, where $s_{\mathrm{inst}}$ denotes the expected instantaneous indegree.
 
\subsection{Computational Complexity}
\label{sec:complexity}

Let $n = N(T-L-1)$ denote the number of training pairs after forming the temporal design window, where $N$ is the number of independent trajectories, $T$ is the time horizon, and $L$ is the maximum lag order. Using the node-wise predictors of hidden width $H$, Stage-1 screening requires 
\begin{equation}
\mathcal{O}\!\left(E_1\, n\, d\, H\, (dL + d_{\mathrm{inst}})\right)
\end{equation}
time, where $E_1$ is the number of epochs of Stage~1 and $d_{\mathrm{inst}}=d$ when instantaneous condition is used and $0$ otherwise. Stage-2 operates on the masked inputs and costs 
\begin{equation}
\mathcal{O}\!\left(E_2\, n\, d\, H\, (\rho_{\mathrm{lag}} dL + \rho_{\mathrm{inst}} d) \right) ,
\end{equation}
$\rho_{\mathrm{lag}}$ and $\rho_{\mathrm{inst}}$ represent the retained
fractions of lagged and instantaneous candidates after Stage~1. The spectral acyclicity penalty contributes an overhead of 
\begin{equation}
\mathcal{O}\!\left(E_2\, (n/B_{\mathrm{batch}})\, K d^2\right)
\end{equation}
where $B_{\mathrm{batch}}$ is the mini-batch size and $K$ denotes the number of power iteration steps. Combining these terms, the overall computational complexity is
\begin{equation}
\label{eq:overall_complexity}
\begin{aligned}
\mathcal{O}~\bigl(
& E_1 n d H (dL+d_{\mathrm{inst}})
  + E_2 n d H(\rho_{\mathrm{lag}}dL+\rho_{\mathrm{inst}}d) \\
& \quad + E_2 (n/B_{\mathrm{batch}}) K d^2
\bigr).
\end{aligned}
\end{equation}

Complete details are provided in Appendix~\ref{app:complexity}.

\section{Numerical Results}
\label{sec:results}

\subsection{Performance with respect to Dimension}

\begin{figure*}
  \vskip 0.15in
  \begin{center}
    \centerline{
      \includegraphics[width=0.9\columnwidth]{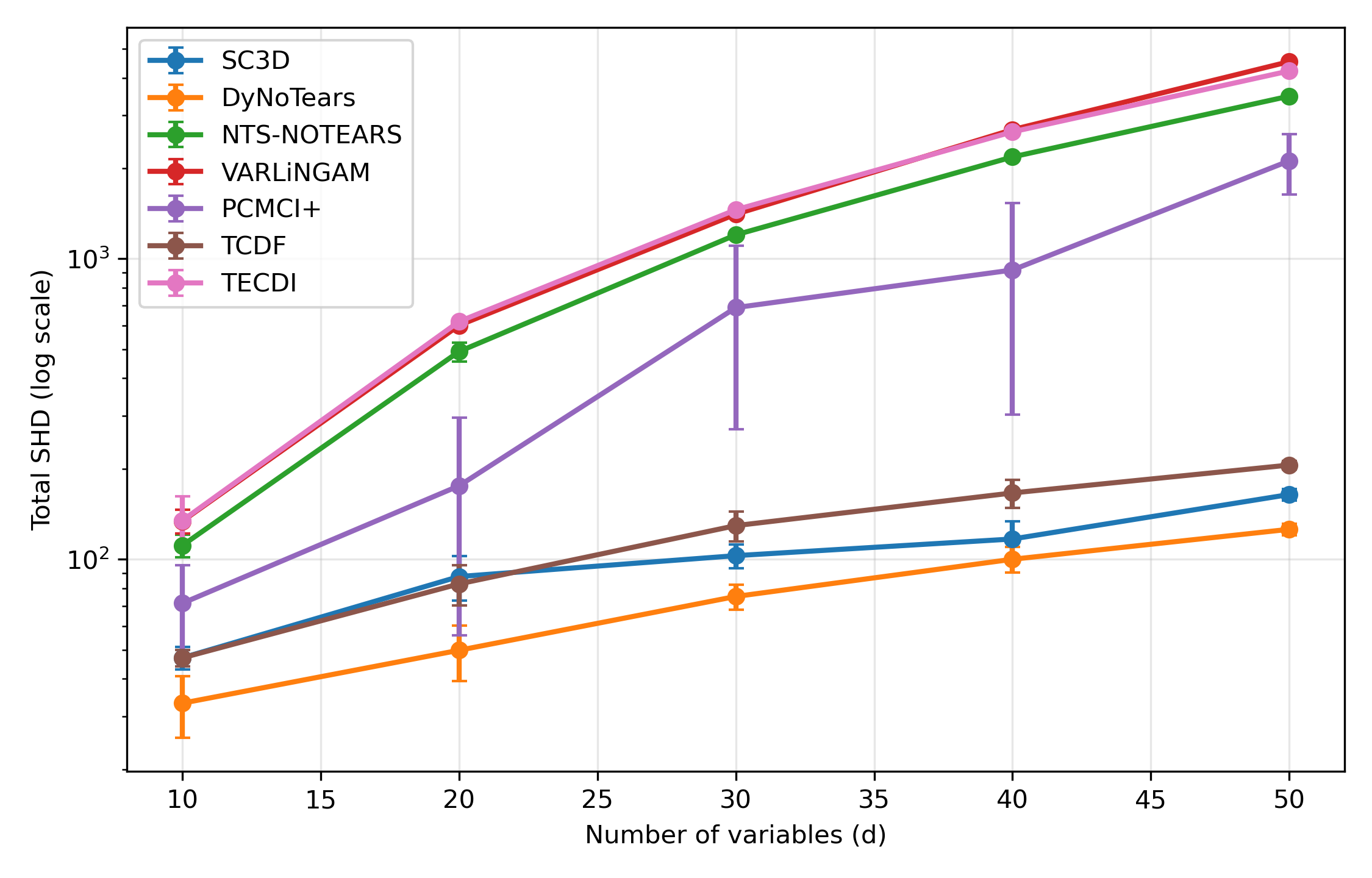}
      \hfill
      \includegraphics[width=0.9\columnwidth]{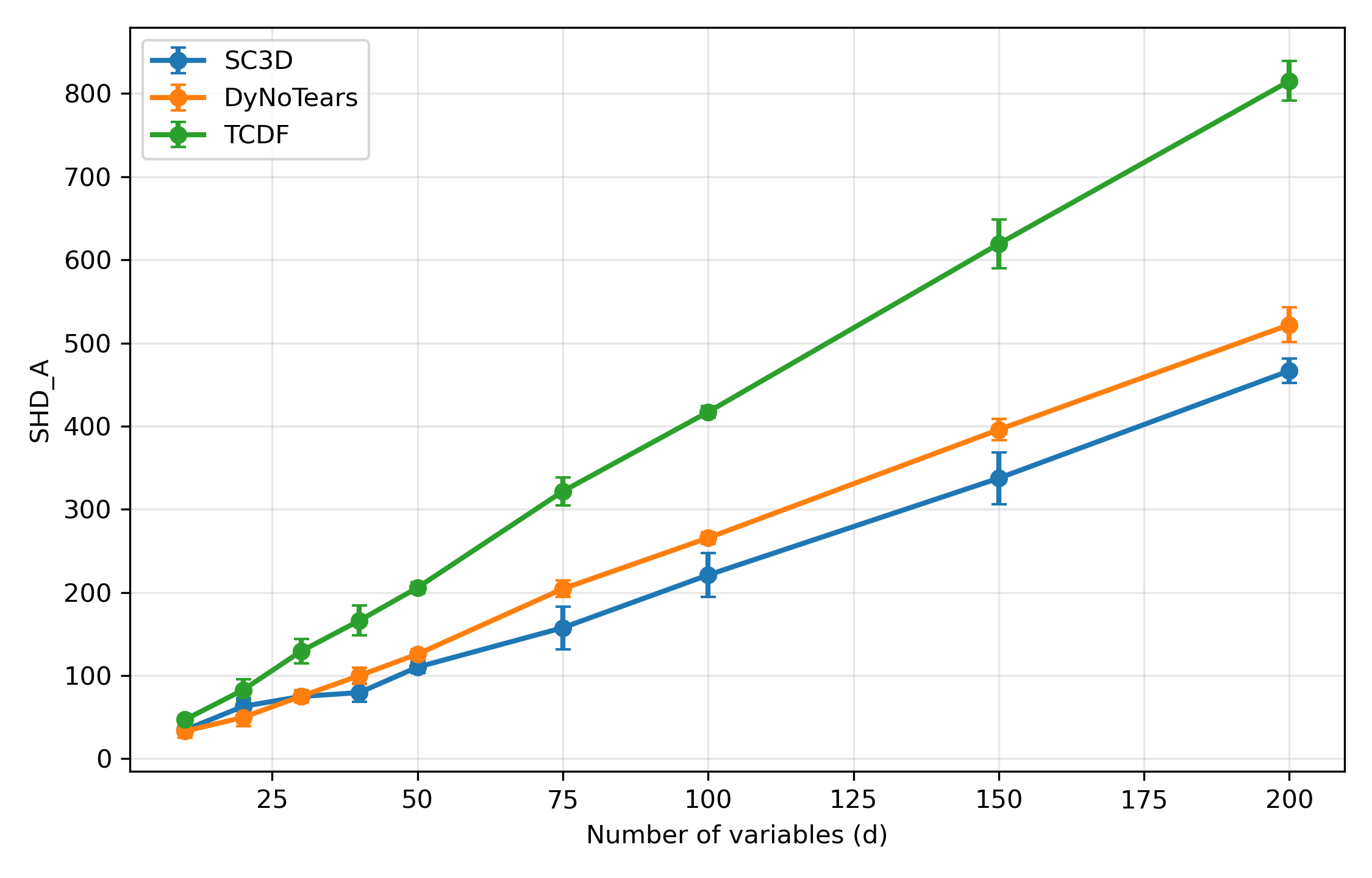}
    }
    \caption{
    Performance of causal structure recovery under increasing dimension ($L=3$, $T=200$).
    \textbf{Left:} Total structural Hamming distance $\mathrm{SHD}_{\mathrm{total}}$, accounting for both lagged and instantaneous errors, shown on a logarithmic scale.
    \textbf{Right:} Lagged-only structural error $\mathrm{SHD}_A$ shown on a linear scale for methods which model lagged dependencies, enabling a direct comparison for temporal structure recovery.
    }
    \label{fig:shd_scalability}
  \end{center}
  \vskip -0.3in
\end{figure*}

We analyze how causal structure recovery methods scale with number of variables under the nonlinear SVAR model in ~\eqref{eq:linear_svar}. Specifically, we set the lag order to $L=3$ and time horizon $T=200$, vary the dimension given from the set $d \in \{10,20,30,40,50\}$, while keeping the expected graph sparsity constant. Additionally, we average the results across five seeds to compare the performance across different baselines. \cref{fig:shd_scalability} (left) presents our plots the total structural Hamming distance, 
\[ \mathrm{SHD}_{\mathrm{total}} = \sum_{\ell=1}^{L}
\mathrm{SHD}\!\left(A_\ell, A_\ell^\star\right)
+ \mathrm{SHD}\!\left(B, B^\star\right), \]
where $\mathrm{SHD}(G,G^\star)$ denotes the structural Hamming
distance between the estimated directed adjacency support $G$ and the ground-truth support $G^\star$. The metric is shown on a logarithmic scale to accommodate the large dynamic range across methods and dimensions. For methods that do not provide an instantaneous adjacency matrix under this evaluation protocol, we set $B \equiv 0$ and count all true instantaneous edges as skipped for methods that do not model instantaneous causality (eg., DYNOTEARS, PCMCI+). Given this unified evaluation, \textsc{SC3D} has the lowest or comparable total SHD performance across all dimensions and significantly outperforms a number of temporal baselines whose total structural error grow rapidly as the dimension $d$ increases. While DYNOTEARS yields lower SHD in certain scenarios, it can be regarded as a more restricted temporal differentiable benchmark due to its linearity assumptions.

To separate the scalability with regards to temporal lag only structure, \cref{fig:shd_scalability} (right) illustrates the lag-only error $\mathrm{SHD}_A$ for \textsc{SC3D} and DYNOTEARS in a linear scale to earmark absolute differences in lag recovery. Even though \textsc{SC3D}, DYNOTEARS and TCDF outperform the classical baselines in terms of scalability, it is clearly evident that \textsc{SC3D} consistently achieves lower $\mathrm{SHD}_A$ even when dimensions increase which showcases more accuracy in recovery of lagged dependencies. Moreover, the low variance across random seeds indicate stable optimization and structural recovery. In summary, we can say that \textsc{SC3D} demonstrated robust lagged structure learning along with stable handling of instantaneous effects, which accounts for its exceptional scalability to high dimensional regimes. 

\subsection{Ranking performance via AUROC and AUPRC}
\label{sec:results_auroc}

\begin{table*}[t]
  \caption{Aggregated lagged and instantaneous AUROC/AUPRC at $d=30$ ($L=3$, $T=200$).
  Reported values are mean $\pm$ standard deviation over five random seeds.
  Methods for which an explicit instantaneous score  matrix is not reported under this evaluation protocol are marked as ``\xmark'' for the corresponding metrics.}
  \label{tab:auroc_auprc}
  \centering
  \begin{small}
    \begin{sc}
      \begin{tabular}{lcccc}
        \toprule
        Method & AUROC$_A \uparrow$ & AUPRC$_A \uparrow$ & AUROC$_B \uparrow$ & AUPRC$_B \uparrow$ \\
        \midrule
        \textsc{SC3D} & \textbf{0.910 $\pm$ 0.009} & 0.766 $\pm$ 0.028 &
        0.811 $\pm$ 0.036 & \textbf{0.715 $\pm$ 0.061} \\
        DYNOTEARS & 0.858 $\pm$ 0.022 & 0.722 $\pm$ 0.044 & \xmark & \xmark \\
        NTS-NOTEARS & 0.845 $\pm$ 0.026 & 0.736 $\pm$ 0.031 & 0.780 $\pm$ 0.051
        & 0.393 $\pm$ 0.051 \\
        PCMCI+ & 0.890 $\pm$ 0.022 & \textbf{0.778 $\pm$ 0.029} & \xmark & \xmark \\
        TCDF & 0.503 $\pm$ 0.006 & 0.050 $\pm$ 0.010 & \xmark & \xmark \\
        TECDI & 0.904 $\pm$ 0.010 & 0.740 $\pm$ 0.031 & \textbf{0.943 $\pm$ 0.011}
        & 0.437 $\pm$ 0.026 \\
        NeuralGC (cMLP) & 0.500 $\pm$ 0.000 & 0.047 $\pm$ 0.006 & \xmark & \xmark \\
        \bottomrule
      \end{tabular}
    \end{sc}
  \end{small}
  \vskip -0.05in
\end{table*}

Next, we assess the ranking quality of the estimated causal relationships using certain threshold free evaluation metrics: area under the receiver operating characteristics curve (AUROC) and area under the precision-recall curve (AUPRC). These evaluation metrics are particularly meaningful if the underlying causal graph is sparse. For lagged causal relationships, we compute the aggregated scores obtained by considering the maximum absolute edge scores across all lags for every ordered variable pair and comparing it to the ground truth presence of any lagged edge. The instantaneous AUROC and AUPRC is reported only for methods that output an explicit instantaneous score matrix under the evaluation protocol. \cref{tab:auroc_auprc} summarizes our results on a representative dimension $d=30$.

As can be seen in \cref{tab:auroc_auprc}, \textsc{SC3D} is found to generate strong lagged ranking performance, with the best AUROC and near best AUPRC comparable to or exceeding existing temporal baselines such as PCMCI+ and TECDI. 
The performance of TCDF and NeuralGC (cMLP) in terms of lagged AUROC is nearly random in this experiment setup, which suggests that these models have problems ranking lagged causation relations with the synthetic dynamic SEM benchmark. As for the instantaneous structure, TECDI obtains the best AUROC$_B$ score, while \textsc{SC3D} obtains the best AUPRC$_B$ score. Since AUPRC is especially indicative when dealing with sparse networks, these results indicate that \textsc{SC3D} ranks instantaneous edges correctly while achieving decent precision at the same time. Overall, the above results show that \textsc{SC3D} provides a well balanced ranking for both lagged and instantaneous dependencies while maintaining competitive performance over specialized baselines.

\subsection{Sensitivity to Lag Order}
\label{sec:results_L_sweep}

\begin{figure}
  \centering
  \includegraphics[width=\columnwidth]{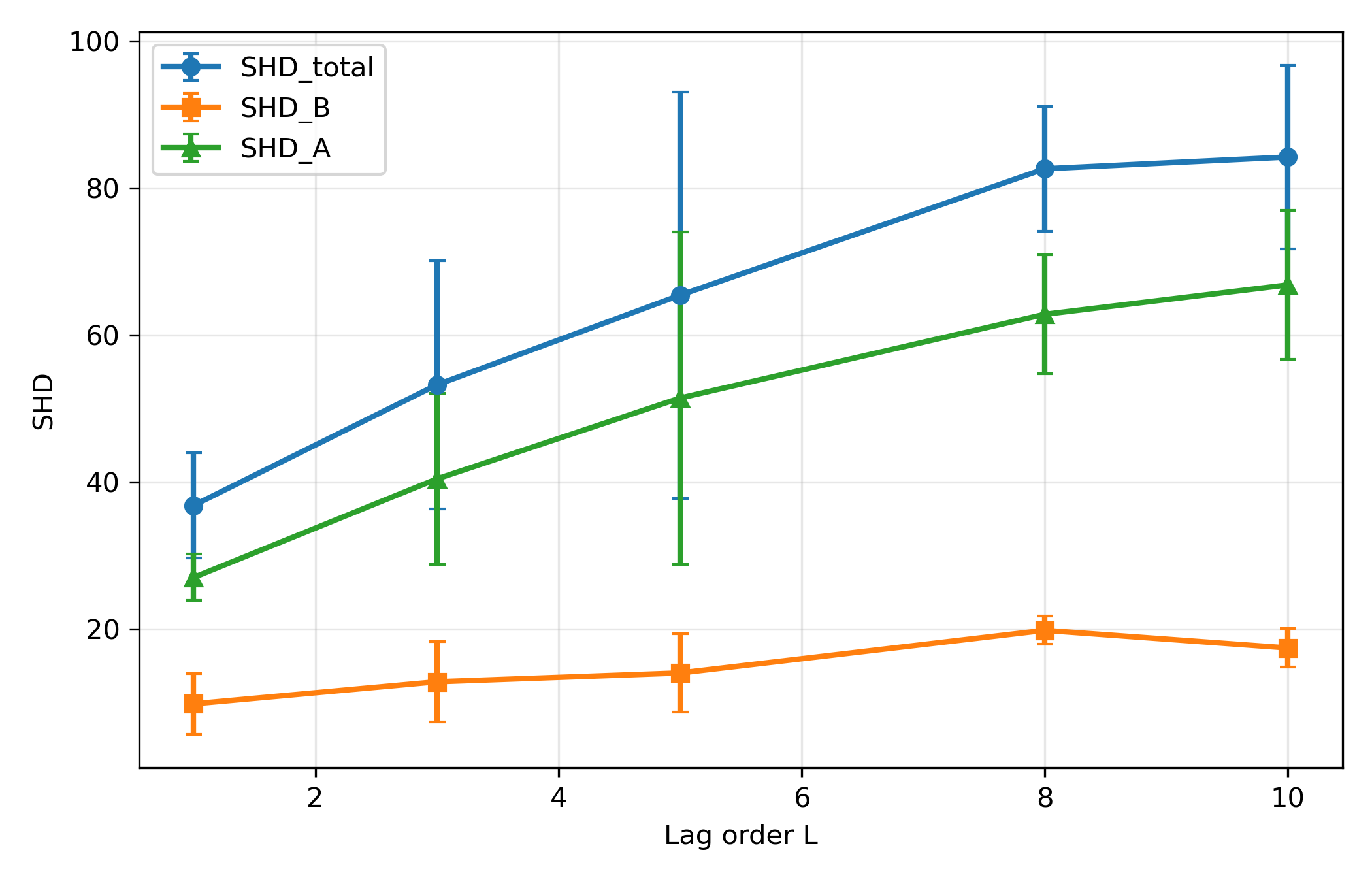}
  \caption{Sensitivity of \textsc{SC3D} to lag order $L$ ($d=8$, $T=200$). The total structural error increases gradually as $L$ grows, while the instantaneous component remains well controlled, indicating stable optimization and robustness to increasing temporal depths.}
  \label{fig:shd_vs_L}
  \vskip -0.25in
\end{figure}

We now analyze how the proposed \textsc{SC3D} framework behaves as we increase the maximum lag order $L$ in the framework. This analysis is not intended to compare scalability in terms of dimensions to other baselines models since most competing methods either do not explicitly model large lag orders or tend to be computationally infeasible beyond a small value of $L$. Instead, we aim to evaluate the robustness of SC3D to increasing temporal depths. We fix the number of variables to $d=8$ and time horizon $T=200$, and sweep over the lag order $L \in \{1,3,5,8,10\}$. We aggregated the results over five random seeds for each $L$. We plot the total SHD, along with the decomposition into lagged and instantaneous components.

\cref{fig:shd_vs_L} shows that the total SHD increases gradually with an increase in $L$, reflecting the larger number of the candidate temporal edges as $L$ increases. On the other hand, the majority of such increment comes from the lagged part, with the instantaneous part still being relatively stable during the entire sweep. This confirms that Stage~1 temporal screening continues to control the much bigger lagged search space, and enforcing acyclicity only on the instantaneous block on the instantaneous component remains numerically stable as additional lagged relationships are introduced. In general, the test demonstrates the scalability of \textsc{SC3D} with respect to the moderately large depth levels without numerical instability or significant deterioration in structure recovery.

\subsection{Qualitative Graph Recovery}
\label{sec:qualitative_analysis}

\begin{figure*}
  \centering
  \includegraphics[width=0.68\textwidth]{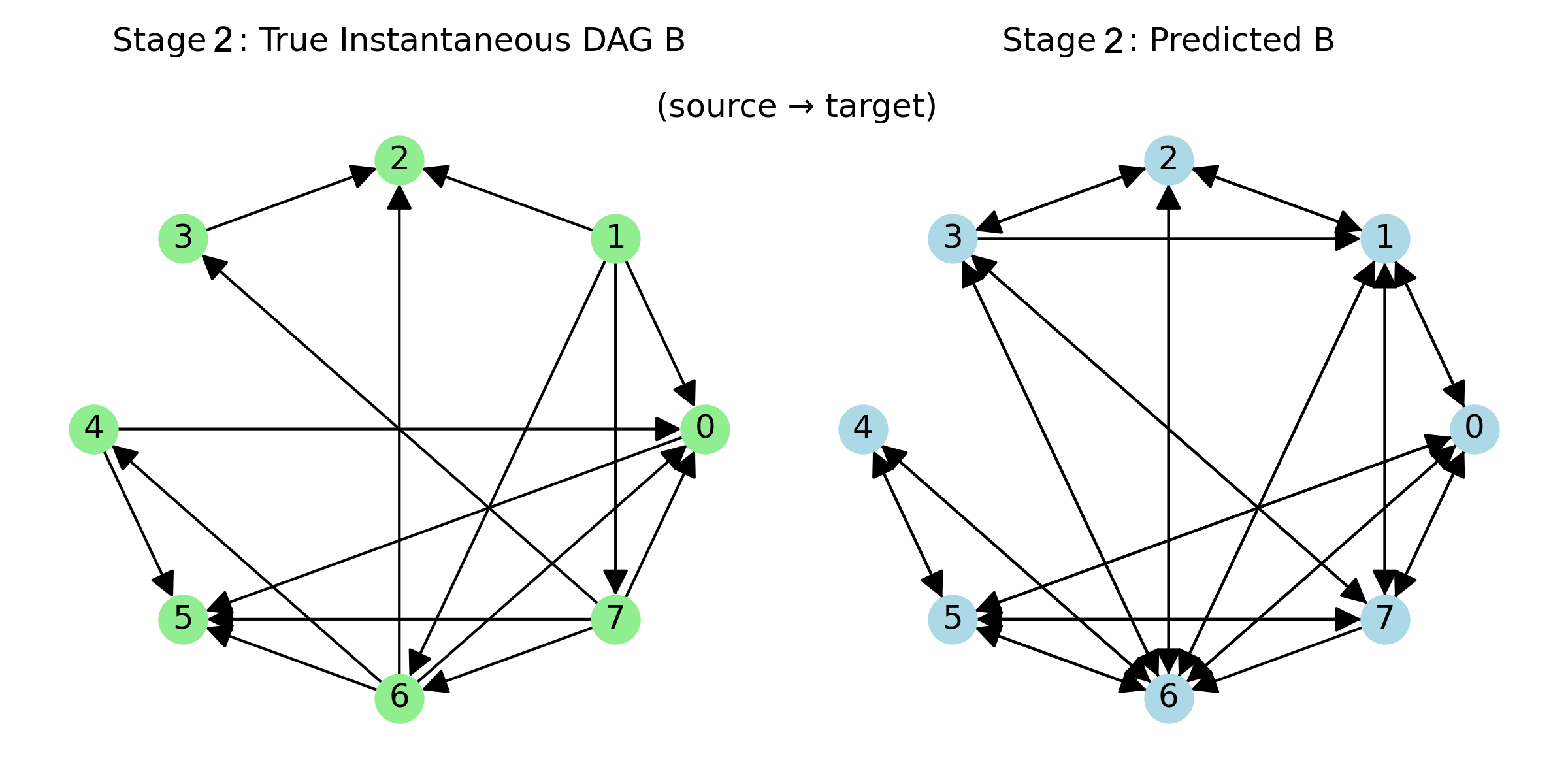}

  \vspace{0.6em}

  \includegraphics[width=0.92\textwidth]{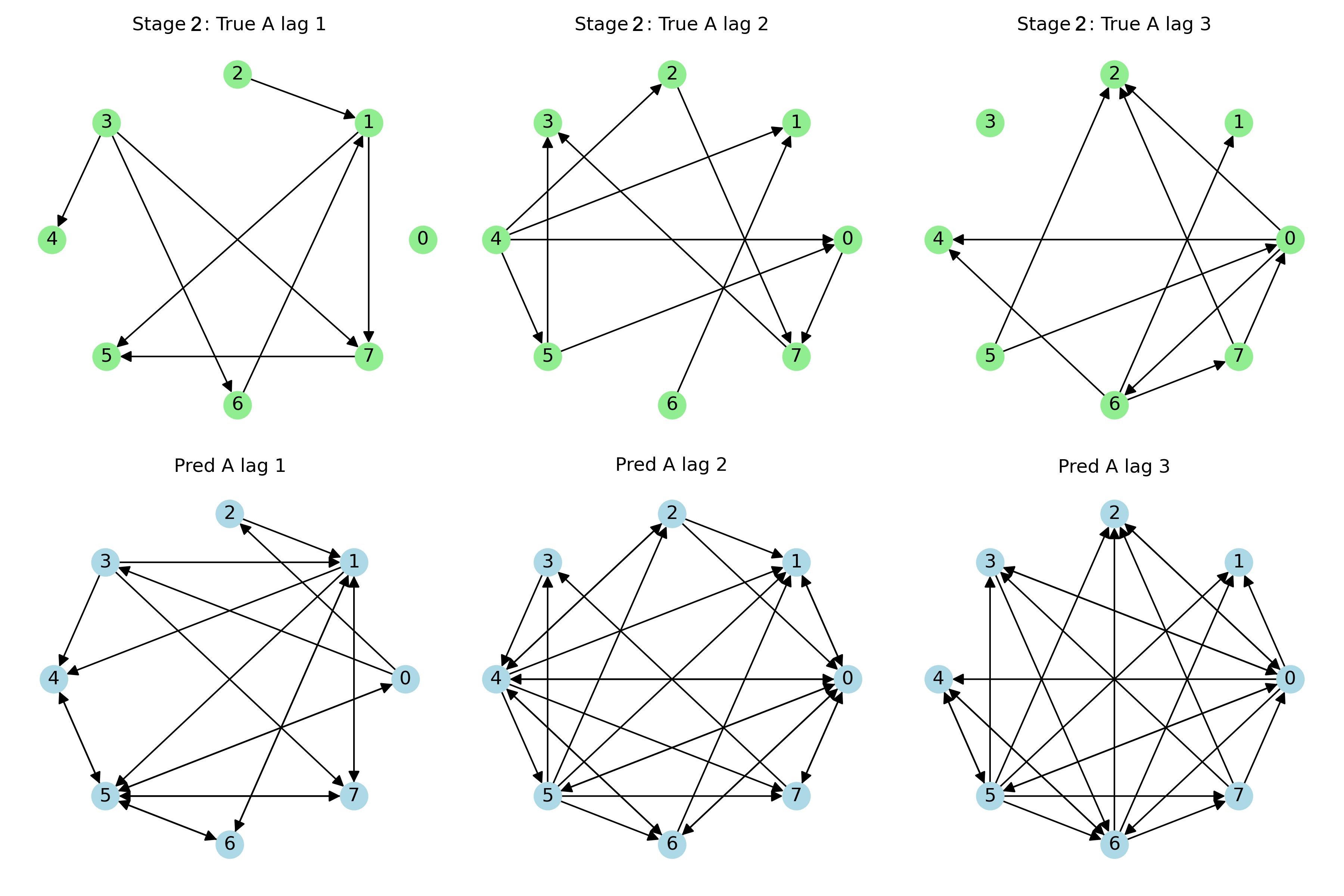}

  \caption{
  Recovering the instantaneous and lagged causal structure on a
  representative synthetic dynamic SEM/SVAR instance.
  \textbf{Top:} ground-truth instantaneous DAG $B^\star$ and the
  instantaneous graph recovered by \textsc{SC3D} after acyclicity enforcement.
  \textbf{Bottom:} ground-truth and recovered lag-specific causal graphs
  $\{A_\ell\}_{\ell=1}^{3}$, with each column corresponding to one lag.
  Directed edges are shown from source to target. The recovered graphs capture
  several dominant dependencies while also revealing the types of spurious
  edges reflected in the quantitative SHD metrics.}
  \label{fig:qualitative_recovery}
\end{figure*}

To understand the overall performance of \textsc{SC3D}, we also visualize the reconstructed causal graphs for a representative synthetic dynamic SEM/SVAR example. \cref{fig:qualitative_recovery} illustrates the ground truth and reconstructed instantaneous graph along with the temporal graphs at each time lag. It can be seen that the reconstructed graph retains several important intra-slice relationships from the truth but is also acyclic after the Stage 2 refinement process. This highlights the importance of the spectral acyclicity constraint and the two-cycle penalty in controlling the instantaneous structure.

When we consider lagged components, the reconstructed graphs show many of the dominant temporal relations for several lag times, although some spurious relations still exist. This behavior aligns with our quantitative analysis, since \textsc{SC3D} does not always reconstruct every edge correctly but provides stable reconstructed graphs which preserve important temporal and instantaneous structures. In this regard, the qualitative analysis helps us understand the graphs that were learned by the algorithm complements the AUROC/AUPRC and SHD evaluation metrics.

\subsection{Performance across dynamical systems}
\label{sec:results_datasets}

We evaluate the performance of the \textsc{SC3D} on different benchmark and synthetic dynamical systems intended to investigate various aspects of temporal causal discovery such as chaotic dynamics, nonstationarity and nonlinear long-range dependencies. Details of these datasets and governing equations are mentioned in Appendix~\ref{app:datasets}.

\subsubsection{Lorenz-96 chaotic dynamics}
\label{sec:results_lorenz96}
\vspace{0.25cm}
The performance is next evaluated in Lorenz-96 system, a commonly used chaotic benchmark with sparse local interactions. Here, the causal relationships are purely lagged with no instantaneous effects and the ground truth reveals fixed indegree pattern. Thus, we study the lagged recovery in terms of top-$k$ approach with $k=3$ incoming edges per target, along with aggregated AUROC and AUPRC capturing threshold independent ranking performance.

\begin{table*}[t]
  \caption{Lorenz-96 (chaotic dynamics): lagged structure recovery at $d=20$ with lag order $L=1$ and horizon $T=200$ (top-$k$ evaluation with $k=3$). Reported values are mean $\pm$ standard deviation over five random seeds. Since Lorenz-96 contains no instantaneous effects, all metrics are reported for the lagged component only.}
  \label{tab:lorenz96}
  \centering
  \begin{small}
    \begin{sc}
      \begin{tabular}{lccc}
        \toprule
        Method & SHD$_A$ (top-$k$) $\downarrow$ & AUROC$_A$ (agg) $\uparrow$ & AUPRC$_A$ (agg) $\uparrow$\\
        \midrule
        \textsc{SC3D}     & \textbf{35.6 $\pm$ 1.5}  & \textbf{0.833} $\pm$ 0.000 & \textbf{0.719} $\pm$ 0.000 \\
        PCMCI+            & 66.4 $\pm$ 12.1 & 0.739 $\pm$ 0.055 & 0.438 $\pm$ 0.088 \\
        DYNOTEARS         & 102.0 $\pm$ 0.0  & 0.500 $\pm$ 0.000 & 0.158 $\pm$ 0.000 \\
        NTS-NOTEARS       & 94.0 $\pm$ 4.6  & 0.578 $\pm$ 0.041 & 0.214 $\pm$ 0.035 \\
        TECDI             & 71.6 $\pm$ 4.3  & 0.754 $\pm$ 0.022 & 0.395 $\pm$ 0.040 \\
        NeuralGC (cMLP)              & 102.0 $\pm$ 0.0  & 0.500 $\pm$ 0.000 & 0.157 $\pm$ 0.000 \\
        \bottomrule
      \end{tabular}
    \end{sc}
  \end{small}
  \vskip -0.2in
\end{table*}

\cref{tab:lorenz96} presents the results at a representative dimension $d=20$. \textsc{SC3D} attains the lowest top-$k$ structural error among all baselines while exhibiting significantly lower variance across seeds. Moreover, it attains the highest AUROC and AUPRC, showcasing accurate and stable ranking of the true causal parents in this chaotic regime. Among the baseline methods, the two best-performing algorithms are PCMCI+ and TECDI, yet both obtain substantially larger SHD$_A$ and lower AUPRC values compared to \textsc{SC3D}. In contrast, NTS-NOTEARS performs better than the baseline algorithms (DYNOTEARS and NeuralGC), whose AUROC values are close to random ranking, yet it still lags trails the strongest methods in termns of both structural recovery and ranking performance. Overall these results showcase \textsc{SC3D} remains robust under challenging lag-only dynamics while also maintaining scalability and stability. 

\subsubsection{Time Varying Structural Equation Model (TVSEM).}
\label{sec:results_tvsem}

Next, we evaluate the performance of \textsc{SC3D} on a time-varying structural equation model (TVSEM), a low-dimensional nonstationary benchmark where the direction of causality reverses over time. This system oscillates between two regimes in non-overlapping time intervals where the dominant lagged dependency switches between $X_{t-1}\!\rightarrow\!Y_t$ and $Y_{t-1}\!\rightarrow\!X_t$. This example dataset explicitly tests the ability of our method, when applied in a windowed manner, to track a changing direction of causality over time rather than just averaging them out. \cref{fig:tvsem_tracking} depicts the scores for the Lag-1 edge for all windows considered in the task. In the left panel, we show the results for SC3D while the right panels puts together the baselines (DYNOTEARS and PCMCI+). The shaded regions in the plots denote the first regime and the unshaded regions denote switched or second regime.

\begin{figure}
  \centering
  \includegraphics[width=\columnwidth]{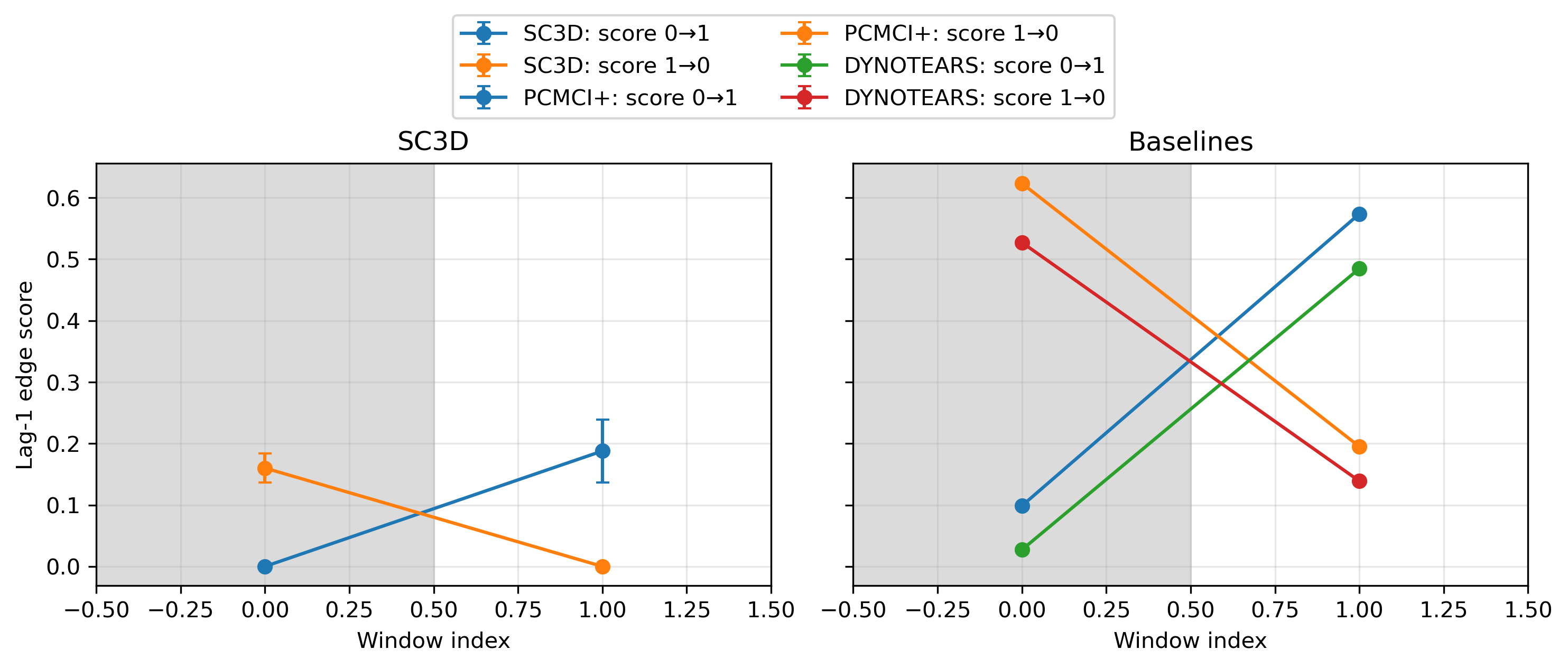} 
  \caption{
  TVSEM regime switching direction tracking using windowed evaluation (lag order $L=1$). Left: \textsc{SC3D}. Right: baselines (DYNOTEARS and PCMCI+). \textsc{SC3D} displays smoother and temporally consistent score trajectories that fit well to regime changes while baselines show sharper but less stable transitions throughout windows.
  }
  \label{fig:tvsem_tracking}
  \vskip -0.25in
\end{figure}

\textsc{SC3D} showcases a distinct and consistent separation of the two window-level directional scores, with the dominant edge smoothly adapting to the change of regime. On the contrary, baseline methods display higher score magnitudes but exhibit less consistent behavior across the regimes, which indicates reduced sensitivity to temporal changes in directionality. Although all the methods attain perfect window-level directional accuracy in this low-dimensional setup, the score trajectories reveal significant differences. \textsc{SC3D} has more stable and interpretable directional information over time whereas the baselines tend to depend on sharper but more rigid score transitions. Therefore, this experiment may only be viewed as a qualitative analysis of temporal score stability, rather than as a strict comparison of accuracy. The above results reflect the advantage of \textsc{SC3D} in providing stable and interpretable directional information when applied to nonstationary time-series systems.

\subsubsection{Nonlinear Continuous 8-variable (NC8) System}

\begin{table}[ht]
  \caption{Nonlinear Continuous 8-variable dataset (NC8): aggregated ranking performance for lagged causal structure ($d=8$, $L=4$, $T=200$). NC8 contains no instantaneous effects, so all metrics correspond to lagged structure only. Reported values are mean $\pm$ standard deviation over five random seeds.}
  \label{tab:nc8}
  \centering
  \small
  \begin{tabular}{lcc}
    \toprule
    Method & AUROC$_A^{\mathrm{agg}} \uparrow$ & AUPRC$_A^{\mathrm{agg}} \uparrow$ \\
    \midrule
    \textsc{SC3D}      
      & 0.885 $\pm$ 0.004 & \textbf{0.828 $\pm$ 0.007} \\
    PCMCI+             
      & \textbf{0.900 $\pm$ 0.008} & 0.804 $\pm$ 0.025 \\
    NTS-NOTEARS        
      & 0.723 $\pm$ 0.027 & 0.374 $\pm$ 0.035 \\
    DYNOTEARS          
      & 0.653 $\pm$ 0.021 & 0.410 $\pm$ 0.007 \\
    TECDI              
      & 0.525 $\pm$ 0.082 & 0.300 $\pm$ 0.052 \\
    NeuralGC (cMLP)    
      & 0.500 $\pm$ 0.000 & 0.161 $\pm$ 0.000 \\
    \bottomrule
  \end{tabular}
  \vskip -0.1in
\end{table}

Next, we consider lagged causal discovery on the NC8 dataset, a low dimensional nonlinear dynamical system ($d=8$) which has structured long-range temporal edges without any instantaneous edges and a maximum lag order $L=4$. The graph in this setting only has lagged edges and no instantaneous causal effects. Thus we evaluate the lagged components using threshold-free methods for ranking that are averaged over time lags. 

As seen in ~\cref{tab:nc8}, the PCMCI+ has the best AUROC score, reflecting excellent performance in global ranking performance on this small-scale system. However, \textsc{SC3D} has the best AUPRC score, which is typically informative on sparse causal graphs, where precision at high confidence is crucial. The NTS-NOTEARS algorithm improves over DYNOTEARS and NeuralGC in terms of AUROC, although its AUPRC value is significantly lower than those of \textsc{SC3D} and PCMCI+. TECDI and NeuralGC exhibit inferior ranking performance in this observational lag-only setup. In summary, these findings indicate that \textsc{SC3D} has competitive performance on nonlinear long range temporal dependencies, especially with strong precision-recall behavior for sparse lagged causal structure. 

\subsection{Real-world validation on CausalRivers}

\begin{table*}[t]
\caption{Real-world CausalRivers-East Germany benchmark. We sample connected
subgraphs of size $d=10$ and $d=20$ and evaluate lagged structure recovery only,
since no instantaneous ground-truth DAG is available. Reported values are mean
$\pm$ standard deviation over five random subgraph seeds.}
\label{tab:causalrivers}
\centering
\begin{small}
\begin{tabular}{lcccc}
\toprule
Method & $d$ & SHD$_A$ (top-$E$) $\downarrow$ & AUROC$_A$ (agg) $\uparrow$ & AUPRC$_A$ (agg) $\uparrow$ \\
\midrule
\textsc{SC3D} & 10 & $16.00 \pm 2.00$ & $\mathbf{0.6095 \pm 0.1097}$ & $0.2203 \pm 0.1129$ \\
DYNOTEARS & 10 & $81.00 \pm 0.00$ & $0.5085 \pm 0.0226$ & $0.1089 \pm 0.0199$ \\
NTS-NOTEARS & 10 & $27.80 \pm 29.99$ & $0.5522 \pm 0.1285$ & $0.2088 \pm 0.1790$ \\
PCMCI+ & 10 & $15.60 \pm 2.19$ & $0.5903 \pm 0.1052$ & $\mathbf{0.2488 \pm 0.1027}$ \\
TECDI & 10 & $\mathbf{15.20 \pm 2.28}$ & $0.5566 \pm 0.0954$ & $0.1552 \pm 0.0442$ \\
Neural-GC & 10 & $80.60 \pm 0.89$ & $0.5000 \pm 0.0000$ & $0.1044 \pm 0.0099$ \\
TCDF & 10 & $68.20 \pm 28.07$ & $0.4890 \pm 0.0108$ & $0.1020 \pm 0.0051$ \\
\midrule
\textsc{SC3D} & 20 & $35.20 \pm 1.79$ & $0.6228 \pm 0.0537$ & $0.0910 \pm 0.0216$ \\
DYNOTEARS & 20 & $359.60 \pm 0.55$ & $0.5042 \pm 0.0102$ & $0.0632 \pm 0.0206$ \\
NTS-NOTEARS & 20 & $95.80 \pm 148.29$ & $\mathbf{0.6360 \pm 0.1266}$ & $\mathbf{0.1681 \pm 0.1119}$ \\
PCMCI+ & 20 & $35.00 \pm 3.83$ & $0.6114 \pm 0.0250$ & $0.1317 \pm 0.0392$ \\
TECDI & 20 & $\mathbf{34.40 \pm 0.89}$ & $0.5668 \pm 0.0484$ & $0.1442 \pm 0.0296$ \\
NeuralGC (cMLP) & 20 & $360.60 \pm 0.55$ & $0.5000 \pm 0.0000$ & $0.0511 \pm 0.0014$ \\
TCDF & 20 & $360.80 \pm 0.45$ & $0.5064 \pm 0.0146$ & $0.0582 \pm 0.0106$ \\
\bottomrule
\end{tabular}
\end{small}

\end{table*}

We evaluate our approach with the CausalRivers benchmark~\cite{stein2025causalrivers}, which utilizes real-world time-series data extracted from river discharge measurements. The available ground truth in the dataset corresponds to hydrological connectivity rather than an instantaneous intra-slice DAG, thus we test the lagged structure recovery only. Specifically, for each run, we randomly sample connected subgraphs of size $d \in \{10,20\}$ from the East Germany region and partition time series into disjoint windows, each viewed as an independent trajectory. We use SHD at top-$E$, where $E$ is the number of ground truth edges, together with the aggregated AUROC and AUPRC over the lagged edges.

As shown in Table~\ref{tab:causalrivers}, \textsc{SC3D} exhibits competitive performance for both values of graph size. When $d=10$, \textsc{SC3D} obtains the highest AUROC and is competitive in structural accuracy against TECDI and PCMCI+. Meanwhile, when $d=20$, \textsc{SC3D} still achieves favorable SHD while exhibiting stable performance among different seeds. On the contrary, DYNOTEARS, Neural-GC, and TCDF degrade drastically in structural recovery with increasing $d$. In addition, NTS-NOTEARS obtains strong rankings but with substantial variance. Overall, these experimental results suggest that \textsc{SC3D} achieves competitive performance on real-world temporal causal data beyond controlled synthetic benchmarks.

\subsection{Ablation Study}
\label{sec:ablation}

\begin{table*}[ht]
\label{tab:ablation}
\centering
\small
\setlength{\tabcolsep}{6pt}
\renewcommand{\arraystretch}{1.15}
\caption{Ablation study for Dynamic-SC3D on VAR data. Each component (Stage 1 preselection, penalty freezing, and 2-cycle penalty) is removed to analyze its contribution to structural recovery.}
\label{tab:ablation_var}
\begin{tabular}{lccc}
\toprule
\textbf{Variant} & \textbf{SHD$_\text{total}$ $\downarrow$} & \textbf{SHD$_B$ $\downarrow$} & \textbf{F1$_B$ $\uparrow$} \\
\midrule
\multicolumn{4}{l}{\textbf{$d=20$}} \\
\midrule
Full (SC3D)                & 87.60 $\pm$ 13.17  & \textbf{24.60 $\pm$ 4.63}  & 0.510 $\pm$ 0.046 \\
Linear predictor           & \textbf{79.00 $\pm$ 10.97} & 25.20 $\pm$ 5.23 & \textbf{0.586 $\pm$ 0.069} \\
No freezing                & 97.40 $\pm$ 16.13  & 25.60 $\pm$ 6.97 & 0.505 $\pm$ 0.088 \\
No 2-cycle penalty         & 94.20 $\pm$ 12.70  & 25.60 $\pm$ 3.77 & 0.487 $\pm$ 0.037 \\
No Stage 1                 & 637.60 $\pm$ 6.62  & 133.80 $\pm$ 15.05 & 0.193 $\pm$ 0.021 \\
\midrule
\multicolumn{4}{l}{\textbf{$d=40$}} \\
\midrule
Full (SC3D)                & \textbf{116.80 $\pm$ 15.04} & \textbf{37.20 $\pm$ 6.52} & \textbf{0.730 $\pm$ 0.075} \\
Linear predictor           & 157.60 $\pm$ 19.89 & 60.00 $\pm$ 5.18  & 0.506 $\pm$ 0.046 \\
No freezing                & 148.80 $\pm$ 42.13 & 46.00 $\pm$ 12.05 & 0.618 $\pm$ 0.130 \\
No 2-cycle penalty         & 165.40 $\pm$ 11.50 & 51.80 $\pm$ 5.64  & 0.506 $\pm$ 0.058 \\
No Stage 1                 & 2848.20 $\pm$ 45.20 & 658.20 $\pm$ 51.46 & 0.105 $\pm$ 0.019 \\
\bottomrule
\end{tabular}
\end{table*}

We perform an ablation study to analyze the how the key components of \textsc{SC3D} operate on lower ($d=20$) and large ($d=40$) dimensions. \cref{tab:ablation_var} presents the comparison between the full proposed method against ablated variants that remove Stage~1 preselection, remove penalty freezing, remove the 2-cycle penalty, or replace the nonlinear predictor with a linear predictor. We observe that removing Stage~1 causes the most significant degradation: the total SHD increases substantially at both dimensions, and the instantaneous F1 score becomes close to zero. This results shows that the preselection stage is crucial phase for controlling the search space before deploying the constrained refinement.

It can be seen from the other ablation settings that acyclic penalty and the 2-cycle penalty also contribute in obtaining stable recovery of the graph at the instantaneous level, particularly when the dimension is high. The linear predictor performs well when $d=20$ since it has an approximately linear structure in the case of a synthetic SVAR setting. However, its performance deteriorates when $d=40$, where the complete nonlinear \textsc{SC3D} with all the components gives the best recovery. This suggests that the individual components of the \textsc{SC3D} method become increasingly important with increasing graph dimensions. Additional sensitivity analysis regarding the Stage~1 masking thresholds is provided in Appendix~\ref{app:masking_ablation}.

\section{Conclusion}
\label{sec:conclusion}

We presented \textsc{SC3D}, a stable, scalable and differentiable framework for causal discovery in multivariate dynamical systems with both lagged and instantaneous relationships. By combining node-wise temporal preselection along with constrained optimization subject to a spectral acyclicity penalty, \textsc{SC3D} facilitates scalable structure learning while avoiding numerical instabilities of smooth acyclicity constraints. The proposed formulation covers nonlinear dynamic structural equation models, with the linear SVAR model considered as a special case, and the theoretical analysis shows that the
instantaneous acyclicity constraint is sufficient to ensure acyclicity of the
time-unrolled graph. Empirical results demonstrated strong and stable performance of \textsc{SC3D} across a range of synthetic SVAR systems, nonlinear and chaotic dynamical systems, and real-world datasets. It addresses the joint recovery of lagged and instantaneous causal structure while remaining competitive with existing temporal methods. Future work will focus on explicitly time varying causal graphs, online settings that exhibit delays and instantaneous effects simultaneously. 

\section{Acknowledgements}

This material is based upon work supported by the U.S. Department of Energy (DOE), Office of Science, Office of Advanced Scientific Computing Research, under Contract No. DE-AC02–06CH11357. We acknowledge
support from DOE FES award
``DeepFusion Accelerator for Fusion Energy Sciences in Disruption Mitigation" (PI Dr. Michael Halfmoon). RM was supported by an ARO ECP award from the Program `Modeling of Complex Systems' (PM - Dr. Rob Martin). We also acknowledge the support of Penn State ICDS computing resources.

\appendix
\section{Theoretical Analysis}
\subsection{Assumptions for Stage 1}
\label{app:stage1-assumptions}

\begin{enumerate}

\item For each node $j$, the true conditional $p^\star(X_{t+1}^j \mid \mathcal{V}_t^{(j)})$ belongs to the model class $\{p_\theta(\cdot \mid \cdot)\}_\theta$.

\item The joint distribution of $(X_{t+1},X_t,\dots,X_{t+1-L})$ has a strictly positive density on its support.

\item If a variable $W\in \mathcal{V}_t^{(j)}$ belongs to the dynamic parent set of node $j$ in \eqref{eq:nonlinear_dsem} for node $j$ (either an instantaneous parent in $B^\star$ or a lagged parent in some $A_\ell^\star$),
then $X_{t+1}^j$ is not conditionally independent of $W$ given $\mathcal{V}_t^{(j)}\setminus\{W\}$.
\end{enumerate}

\subsection{Proof of \cref{thm:stage1}}
\label{app:stage1-proof}

\begin{proof}

Let us consider a node $j$ and abbreviate $\mathcal{V}:=\mathcal{V}_t^{(j)}$.
For any subset $S\subseteq \mathcal{V}$, we can write the expected conditional log-likelihood as
\begin{equation}
\begin{aligned}
    \sup_{\theta:\,\mathrm{supp}(\theta) \subseteq S} \mathbb{E}[\log p_\theta(X_{t+1}^j\mid X_S)] = C_j - \\ \inf_{\theta:\,\mathrm{supp}(\theta)\subseteq S} \mathbb{E}\!\left[\mathrm{KL}\!\left(p^\star (\cdot\mid \mathcal{V})\, \|\, p_\theta(\cdot\mid X_S)\right)\right]
\end{aligned}
\end{equation}
where $C_j=\mathbb{E}[\log p^\star(X_{t+1}^j\mid \mathcal{V})]$ does not depend on $S$. Thus, maximizing \eqref{eq:stage1-pop} is equivalent (up to the constant $C_j$) to minimizing 
\begin{equation}
    \mathcal{L}_j(S) := \underbrace{\inf_{\theta:\,\mathrm{supp}(\theta)\subseteq S} \mathbb{E}\!\left[\mathrm{KL}\!\left(p^\star(\cdot\mid \mathcal{V})\,\|\,p_\theta(\cdot\mid X_S)\right)\right]}_{\eta_j(S)\ \geq 0}\;+\;\lambda |S|
\end{equation}

Let $B:=MB^{\mathrm{dyn}}(j)$. By definition, conditioning on $B$ is sufficient, thus the model is well-specified on $B$ and we have $\eta_j(B)=0$. For any strict superset $S\supsetneq B$, we can ignore extra variables in parameters, so $\eta_j(S)=0$ as well. But we have $|S| > |B|$, and therefore $\mathcal{L}_j(S) > \mathcal{L}_j(B)$.

Now consider any subset $S$ that does not contain $B$. By assumption of strict faithfulness and strict positivity, omitting any element of $B$ induces a strictly positive KL gap: $\eta_j(S) > 0$ whenever $B \nsubseteq S$. Therefore, there exists $\lambda_0$ such that for all $0 < \lambda \leq \lambda_0$, we have $\eta_j(S)$ dominating the sparsity advantage $\lambda(|B| - |S|)$ for any $S$ with $B \nsubseteq S$.

Therefore $B$ (or an equivalent minimal boundary when non-unique) is the minimizer of $\mathcal{L}_j(\cdot)$, and any minimizer corresponds to a maximizer of $\Psi_j(\cdot)$. 

\end{proof}

\subsection{Proof of \cref{prop:unrolled_acyclic}}
\label{app:unrolled_acyclic}

We consider the time unrolled directed graph corresponding to ~\eqref{eq:linear_svar}. 


Next we consider a time window $T\in \mathbb{N}$ and define the vertex set
\[
\mathcal{U}_{1:T} := \{(t,j): t\in\{1,\dots, T\}, \; j \in[d]\}
\]

We consider two types of directed edges:
(i) lagged edges across time slices, and, (ii) instantaneous edges within each time slice:
\begin{equation*}
\begin{aligned}
    (t - \ell, i) \to (t, j) &\quad \Longleftrightarrow \quad (A_\ell^\star)_{j i}\neq 0,\;\; \\
    & \ell\in \{1, \dots, L\},\;\; t-\ell\ge 1 \\
    (t,i) \to (t,j) &\quad \Longleftrightarrow \quad (B^\star)_{j i} \neq 0, \;\; i \neq j.
\end{aligned}
\end{equation*}
We denote the resulting directed graph by $\mathcal{G}_{1:T}$.

\begin{proof}
We observe that the directed graph generated by $B^\star$ is acyclic in nature, thus it permits a topological ordering $\pi:[d] \to \{1, \dots, d\}$ such that for every instantaneous edge in $B^\star_{ji} \neq 0$ i.e., $i \to j$, we have $\pi(i) < \pi(j)$.

Now lets define a strict lexicographic order $\prec$ on vertices $(t, j) \in \mathcal{U}_{1:T}$:

\[
\begin{aligned}
(t_1,j_1) \prec (t_2,j_2)
\Longleftrightarrow{}&
\ t_1 < t_2, \\
&\ \text{or } \bigl(t_1 = t_2
\text{ and } \pi(j_1) < \pi(j_2)\bigr).
\end{aligned}
\]

We observe that every directed edge in $\mathcal{G}_{1:T}$ points forward under the order $\prec$. In other words,
\begin{itemize}
    \item \emph{Lagged edges:} If $(t-\ell, i) \to (t, j)$ is a lagged edge, then $t - \ell < t$, thus $(t - \ell, i) \prec (t, j)$.
    \item  \emph{Instantaneous edges:} If $(t, i) \to (t, j)$ is an instantaneous edge, then we have $t = t$ and $\pi (i) < \pi (j)$ by definition, hence we have $(t, i) \prec (t, j)$.
\end{itemize}
Thus we observe that every edge points forward under the order $\prec$, and we can say $\mathcal{G}_{1:T}$ is acyclic in nature.

\end{proof}

\subsection{Assumptions for Identifiability}
\label{ass:ident_dyn}

We consider a restricted nonlinear dynamic SEM with additive noise. The observed process $\{X_t\}_{t\in\mathbb{Z}}$ satisfies
\begin{equation}
\label{eq:ident_dyn_sem}
\begin{aligned}
X_{t+1}^j &= g_j^\star\!\left(X_{t+1}^{\mathrm{pa}_0(j)},
X_t^{\mathrm{pa}_1(j)}, \ldots, X_{t+1-L}^{\mathrm{pa}_L(j)}
\right) + \varepsilon_{t+1}^j, \\
& j=1,\ldots,d,
\end{aligned}
\end{equation}
where:
\begin{enumerate}
    \item the instantaneous graph induced by $\{\mathrm{pa}_0(j)\}_{j=1}^d$ is acyclic

    \item the noise variables $\{\varepsilon_t^j : j \in [d],\, t \in \mathbb{Z}\}$ are jointly independent across $j$, i.i.d.\ over $t$, have positive densities, and are independent of $\{X_s\}_{s \le t}$

    \item the process is causally sufficient, meaning that there are no latent variables that act as common causes of two or more observed components of $X_t$; time-homogeneous, and has maximum lag order $L$

    \item the induced time-unrolled SEM belongs to an identifiable functional model class (IFMOC), for example a nonlinear additive-noise class \citep{peters2011ifmoc,peters2013causal}.

\end{enumerate}

Under the row-target, column-source convention used throughout the paper, the
lagged and instantaneous supports are defined by
\begin{equation}
\label{eq:ident_support_convention}
\begin{aligned}
(A_\ell^\star)_{ji}=1
\quad & \Longleftrightarrow \quad
i\in \mathrm{pa}_\ell(j),
\\
(B^\star)_{ji}=1
\quad &\Longleftrightarrow \quad
i\in \mathrm{pa}_0(j).
\end{aligned}
\end{equation}

\subsection{Proof of \cref{prop:ident_dyn}}
\label{app:proof_ident_dyn}
\begin{proof}
Consider a horizon $H \ge L+1$ and consider a finite collection of variables
\[ \mathcal{V}_H = \{(j,t) : j \in [d],\ t = 1,\ldots,H\}.\]
Next, we define the associated time unrolled graph
$G_H^\star = (\mathcal{V}_H,\mathcal{E}_H^\star)$ as follows: for any $t=L+1,\ldots,H$,
\begin{equation*}
\begin{aligned}
(i,t-\ell) \to (j,t) &\in \mathcal{E}_H^\star  \Longleftrightarrow \quad
(A_\ell^\star)_{ji}=1, \quad \ell=1,\ldots,L, \\
(i,t) \to (j,t) &\in \mathcal{E}_H^\star \Longleftrightarrow \quad
(B^\star)_{ji}=1.
\end{aligned}
\end{equation*}
The variables in the initial window
$\{(j,t): j\in[d],\, t=1,\ldots,L\}$ are treated as exogenous root nodes i.e. root nodes with no incoming edges or parents. We prove the claim in the following steps.

\paragraph{Step 1: $G_H^\star$ is a DAG.}
Since all lagged edges are strictly forward pointing in time, they cannot be part of a directed cycle. According to Assumption~\ref{ass:ident_dyn}(i), the instantaneous graph at each time slice is acyclic. Thus, for each time slice $t$, there exists a topological order $\pi_t$ on the vertices $\{(j,t): j\in[d]\}$ such that each instantaneous edge in the slice $t$ respects this order. Now we introduce a global order $\prec$ on $\mathcal{V}_H$ by
\[(i,s) \prec (j,t) \quad \Longleftrightarrow \quad
\bigl(s < t\bigr) \ \text{or}\ \bigl(s=t \text{ and } \pi_t(i) < \pi_t(j)\bigr). \]
We observe that every lagged edge respects $\prec$ because it goes from time $t-\ell$ to time $t$, and every instantaneous edge respects $\prec$ by construction of
$\pi_t$. Hence $G_H^\star$ contains no directed cycle and is a DAG.

\paragraph{Step 2: The unrolled process is a functional causal model on $G_H^\star$.}
For each $t=L+1,\ldots,H$ and $j\in[d]$, we can rewrite the structural equation
\eqref{eq:ident_dyn_sem} as:
\[X_t^j = g_j^\star\!\left( X_t^{\mathrm{pa}_0(j)},
X_{t-1}^{\mathrm{pa}_1(j)}, \ldots, X_{t-L}^{\mathrm{pa}_L(j)}
\right) + \varepsilon_t^j. \]
We can say that each node $(j,t)$ is a function of its parent variables in $G_H^\star$ along with an exogenous noise variable $\varepsilon_t^j$.
From Assumption~\ref{ass:ident_dyn}(ii), these additive noises are jointly independent across coordinates and independent of the past time steps. Therefore, the finite horizon distribution over $(X_1,\ldots,X_H)$ permits a structural equation model whose DAG is precisely $G_H^\star$.

\paragraph{Step 3: Identifiability of the time unrolled graph.}
Using Assumption~\ref{ass:ident_dyn}(iv), the induced unrolled SEM falls into an
identifiable functional model class. Therefore, by the identifiability results of
IFMOCs and their generalization to time series models with independent noise
\citep{peters2011ifmoc, peters2013causal}, the observational distribution of the
finite horizon process $(X_1,\ldots,X_H)$ uniquely identifies the DAG
$G_H^\star$.

In other words, there does not exist another DAG, $\widetilde G_H \neq G_H^\star$ along with structural mechanisms from the same model class that induce the exactly same observational distribution on $(X_1,\ldots,X_H)$.

\paragraph{Step 4: Recovery of $\{A_\ell^\star\}_{\ell=1}^L$ and $B^\star$.}
Given that the model is time-homogeneous by Assumption~\ref{ass:ident_dyn}(iii), the same lagged and instantaneous parent relationships repeat at every time slice. Therefore:
for each $\ell=1,\ldots,L$,
\begin{equation*}
\begin{aligned}
(A_\ell^\star)_{ji}=1 \quad \Longleftrightarrow \quad
(i,t-\ell)\to(j,t) \\
\text{ in } G_H^\star
\ \text{for any } t=L+1,\ldots,H.
\end{aligned}
\end{equation*}
and
\begin{equation*}
\begin{aligned}
(B^\star)_{ji}=1 \quad \Longleftrightarrow \quad
(i,t)\to(j,t) \\
\text{ in } G_H^\star
\ \text{for any } t=L+1,\ldots,H.
\end{aligned}
\end{equation*}
Since $G_H^\star$ is uniquely identified by the observational distribution, the
summary supports $\{A_\ell^\star\}_{\ell=1}^L$ and $B^\star$ are also uniquely
determined.

Finally, by way of contradiction, assume that there exists another parameter collection $\bigl(\widetilde B,\{\widetilde A_\ell\}_{\ell=1}^L\bigr)\neq
\bigl(B^\star,\{A_\ell^\star\}_{\ell=1}^L\bigr)$ within the same model class that
defines the same observational law of the process. This would mean that for every sufficiently large finite horizon $H$, it would generate the same distribution on
$(X_1,\ldots,X_H)$ but a different time-unrolled graph, thereby contradicting Step~3.
Therefore, we can say that $\{A_\ell^\star\}_{\ell=1}^L$ and $B^\star$ are identifiable within the assumed model class.

\end{proof}

\section{Complexity Analysis}
\label{app:complexity}

We evaluate the computational complexity of \textsc{SC3D} by segregating the costs of the Stage~1 (temporal preselection) and Stage~2 (constrained structure refinement). Let $d$ be the number of variables, $L$ be the maximum lag order, $T$ be the time window, and $N$ be the number of independent trajectories or samples. After constructing the temporal window, the total number of training samples is $n = N(T - L -1)$.

Each node-wise predictor is parameterized by a neural network with a hidden layer of width $H$. Let $E_1$ and $E_2$ denote the number of optimization epochs used in Stage~1 and Stage~2. The mini-batch size is represented by $B$.

In Stage~1, \textsc{SC3D}, fits $d$ independent node-wise predictive models. Each model takes the temporal predictor window as input
\[
    \mathcal{V}_t^{(j)} = \{X_{t+1}^{-j}, X_t, \dots, X_{t+1-L}\},
\]
which has dimensionality $dL + d_{\mathrm{inst}}$, where $d_{\mathrm{inst}} = d$ if instantaneous mode is enabled and $0$ otherwise. For each training step, the forward and backward passes scale linearly with the input dimension and hidden width. Therefore, the total time complexity of Stage~1 given to be
\begin{equation} \label{eq:stage1_comp}
    \mathcal{O}\!\left( E_1 \, n\, d\, H\, (dL + d_{\mathrm{inst}}) \right).
\end{equation}

Stage~2 executes on the masked input space generated by Stage~1. Let $\rho_{\mathrm{lag}} \in (0, 1]$ and $\rho_{\mathrm{inst}} \in (0, 1]$ represent the fraction of retained lagged and instantaneous edges after preselection. The total input dimension per node reduces to $ \rho_{\mathrm{lag}} dL + \rho_{\mathrm{inst}} d$.
Thus the total optimization cost for Stage~2 is 
\begin{equation} \label{eq:stage2_comp}
    \mathcal{O}\!\left( E_2 \, n\, d\, H\, (\rho_{\mathrm{lag}} dL + \rho_{\mathrm{inst}} d) \right).
\end{equation}

\paragraph{Acyclicity overhead}
\textsc{SC3D} deploys a spectral radius penalty to enforce the acyclicity on the instantaneous adjacency matrix $B$ via $K$ steps of power iteration. This computation which is performed periodically typically costs $\mathcal{O}\!\left(K d^2\right)$ per evaluation. If we consider all mini-batches and epochs, the total acyclicity overhead is
\begin{equation} \label{eq:acyc_comp}
    \mathcal{O}\!\left(E_2 \, (n / B_{\mathrm{batch}}) \, K d^2\right).
\end{equation}

Combining all components in~\eqref{eq:stage1_comp},~\eqref{eq:stage2_comp}, and~\eqref{eq:acyc_comp}, the total runtime complexity of \textsc{SC3D} can be expressed as
\begin{equation}
\begin{aligned}
    \mathcal{O}~\bigl( 
    & E_1 \, n\, d\, H\, (dL + d_{\mathrm{inst}}) +
    E_2 \, n\, d\, H\, (\rho_{\mathrm{lag}} dL + \rho_{\mathrm{inst}} d) \\ 
    & + E_2 \, (n / B_{\mathrm{batch}}) \, K d^2\bigr).
\end{aligned}
\end{equation}

\section{Datasets}
\label{app:datasets}
Inspired by the nonlinear temporal causal discovery scenarios studies in UnCLE~\cite{bi2025uncle}, we modify the NC8 and TVSEM benchmarks to generate controlled ground truth dynamics and extract specific elements of lagged and time varying causal structure to suit our experiments.
\subsection{Lorenz 96 Dataset}

We generate time-series data using the Lorenz 96 dynamical system \cite{lorenz1996predictability}, a standard model for chaotic behavior. The system consists of $d$ variables, $x_1, \dots, x_d$, governed by the differential equation:

\begin{equation}
    \frac{dx_i}{dt} = (x_{i+1} - x_{i-2})x_{i-1} - x_i + F,
\end{equation}

\noindent where indices are computed modulo $d$. The data is generated using an Euler discretization with additive Gaussian noise. The state update rule from time $t$ to $t+1$ is:

\begin{equation}
    \mathbf{x}_{t+1} = \mathbf{x}_t + \Delta t \cdot \frac{d\mathbf{x}_t}{dt} + \sigma \cdot \boldsymbol{\epsilon}_t,
\end{equation}

\noindent where $\boldsymbol{\epsilon}_t \sim \mathcal{N}(\mathbf{0}, \mathbf{I})$. The specific parameters used for the simulation are forcing constant $F = 8.0$, time step $\Delta t = 0.01$, noise scale $\sigma = 0.1$, and initialization: $\mathbf{x}_0 \sim \mathcal{N}(F \cdot \mathbf{1}, 0.01^2 \mathbf{I})$.

\subsection{Time-Varying Structural Equation Model (TVSEM)}

We simulate a non-stationary 2-dimensional system ($d=2$) using a regime-switching Vector Autoregressive model. The state evolution follows the equation:

\begin{equation}
    \mathbf{x}_t = \mathbf{A}^{(s_t)} \mathbf{x}_{t-1} + \boldsymbol{\epsilon}_t,
\end{equation}

\noindent where $\boldsymbol{\epsilon}_t \sim \mathcal{N}(\mathbf{0}, \sigma^2 \mathbf{I})$ is Gaussian noise with scale $\sigma=0.1$. The causal structure $s_t$ alternates every 200 time steps. In the first regime, the coupling coefficients are set to $0.8$ for $y \to x$ and $0.1$ for $x \to y$. In the second regime, these coefficients shift to $0.2$ and $0.7$ respectively, effectively reversing the dominant causal direction while maintaining the same underlying graph skeleton.

\subsection{NC8 Dataset}

We utilize an 8-dimensional system ($d=8$) with a lag order of $L=4$ that integrates linear autoregressive decay with heterogeneous non-linear interactions. The state update rule is defined as:

\begin{equation}
    \mathbf{x}_t = \text{clamp}\left( \sum_{\ell=1}^{L} \mathbf{A}^{(\ell)} \mathbf{x}_{t-\ell} + \mathbf{\Phi}(\mathbf{x}_{t-1}, t) + \boldsymbol{\epsilon}_t, -5, 5 \right)
\end{equation}

\noindent where $\boldsymbol{\epsilon}_t \sim \mathcal{N}(\mathbf{0}, \sigma^2 \mathbf{I})$ and the linear weights $\mathbf{A}^{(\ell)}$ decay by a factor of $(\ell+1)^{-1}$. The non-linear function $\mathbf{\Phi}$ applies distinct transformations to specific variables, including sinusoidal coupling, hyperbolic tangent activations, soft-cubic transformations $z \mapsto z^3(1+|z|)^{-1}$, and ReLU-like interactions $\max(\cdot, 0)$, while variable $x_4$ acts as a time-dependent exogenous driver. The data is simulated with a noise scale of $\sigma=0.1$ and bounded within $[-5, 5]$.

\subsection{Additional Results}
\subsubsection{Sensitivity to Stage-1 Masking Threshold}
\label{app:masking_ablation}

\begin{figure}
\label{fig:masking_sensitivity}
  \centering
  \includegraphics[width=0.85\columnwidth]{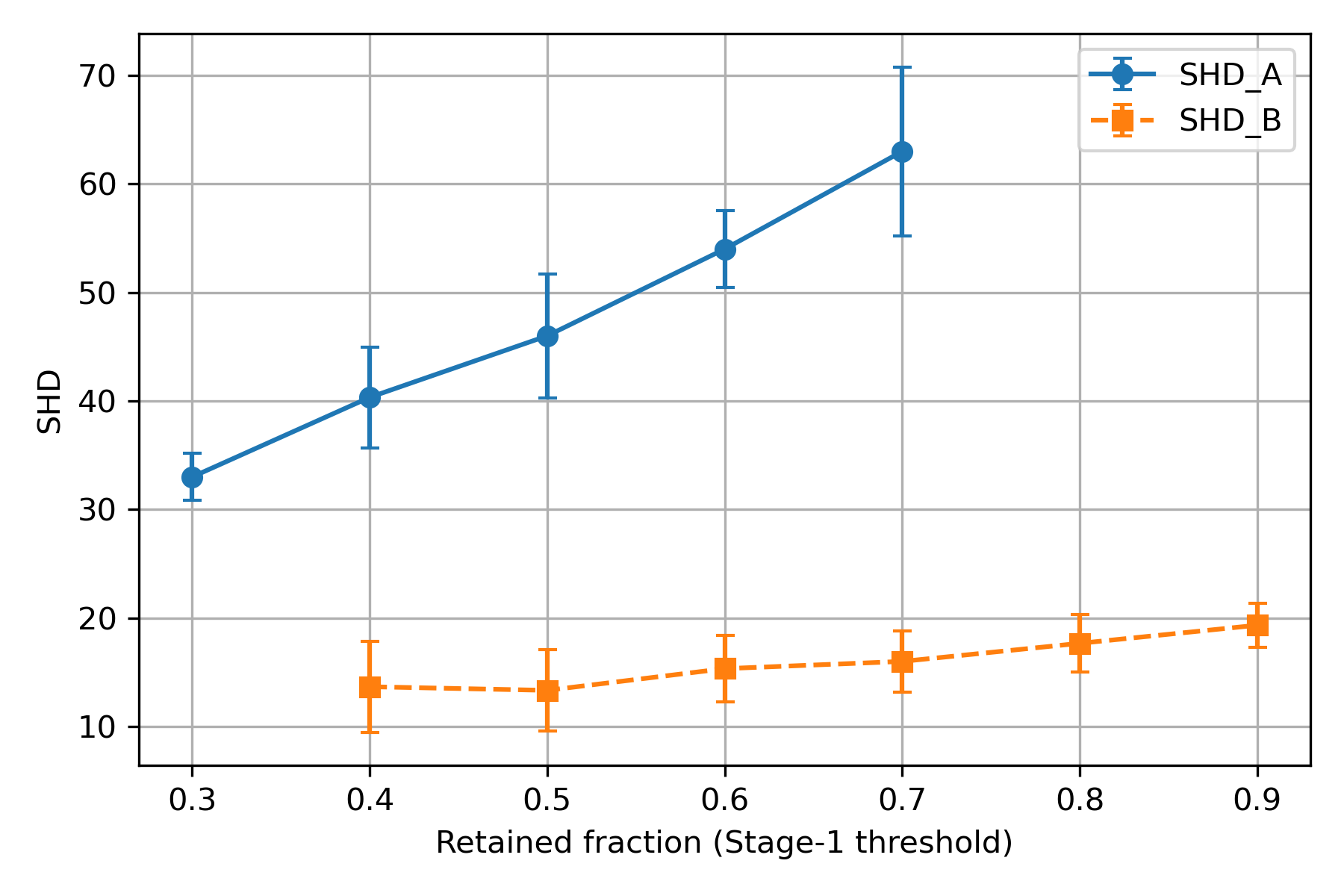}
  \caption{
  Sensitivity of \textsc{SC3D} to the Stage~1 masking threshold,
  measured by the retained fraction of candidate edges
  ($d=10$, $T=200$). Lower retained fractions correspond to
  stricter masks, while higher retained fractions allow more candidate
  edges into Stage~2. SHD$_A$ increases as the retained fraction grows,
  reflecting the larger lagged candidate set, whereas SHD$_B$ remains
  comparatively stable across thresholds.
  }
  \vspace{-0.5cm}
\end{figure}

The effect of \textsc{SC3D}'s sensitivity to the Stage~1 retention fraction is analyzed in \cref{fig:masking_sensitivity}. The stricter the masking procedure, the smaller the number of candidate edges used by the algorithm in Stage~2, which might help in suppressing the spurious lagged edges. However, this strategy could also risk on missing weaker true dependencies. For increasing values of the Stage~1 retention fraction, the lagged structure error increases slowly because there are more candidate edges for the subsequent Stage~2 filtering process to examine. In contrast, the instantaneous portion remains relatively stable across thresholds, implying that over selection can be mitigated through the constrained Stage~2 refinement and spectral acyclicity penalty.

\printcredits

\bibliographystyle{cas-model2-names}

\bibliography{references}





\end{document}